\pgfplotsset{compat=1.18}
\newtcolorbox{myframe}[2][]{%
	enhanced,colback=white,colframe=black,coltitle=black,
	sharp corners,boxrule=0.6pt,
	fonttitle=\itshape,
	attach boxed title to top left={yshift=-0.3\baselineskip-0.4pt,xshift=2mm},
	boxed title style={tile,size=minimal,left=0.5mm,right=0.5mm,
		colback=white,before upper=\strut},
	title=#2,#1
}
\definecolor{Gray}{gray}{0.9}
\definecolor{airforceblue}{rgb}{0.36, 0.54, 0.66}
\definecolor{aliceblue}{rgb}{0.94, 0.97, 1.0}
\definecolor{alizarin}{rgb}{0.82, 0.1, 0.26}
\definecolor{amber}{rgb}{1.0, 0.75, 0.0}
\definecolor{amber(sae/ece)}{rgb}{1.0, 0.49, 0.0}
\definecolor{antiquebrass}{rgb}{0.8, 0.58, 0.46}
\definecolor{azure(web)(azuremist)}{rgb}{0.94, 1.0, 1.0}
\definecolor{bronze}{rgb}{0.8, 0.5, 0.2}
\definecolor{battleshipgrey}{rgb}{0.52, 0.52, 0.51}
\definecolor{bole}{rgb}{0.47, 0.27, 0.23}
\definecolor{bulgarianrose}{rgb}{0.28, 0.02, 0.03}
\definecolor{cadet}{rgb}{0.33, 0.41, 0.47}
\definecolor{ceil}{rgb}{0.57, 0.63, 0.81}
\definecolor{cerulean}{rgb}{0.0, 0.48, 0.65}
\definecolor{charcoal}{rgb}{0.21, 0.27, 0.31}
\definecolor{coolblack}{rgb}{0.0, 0.18, 0.39}
\definecolor{coolgrey}{rgb}{0.55, 0.57, 0.67}
\definecolor{darkcandyapplered}{rgb}{0.64, 0.0, 0.0}
\definecolor{darkbrown}{rgb}{0.4, 0.26, 0.13}
\definecolor{darkcerulean}{rgb}{0.03, 0.27, 0.49}
\definecolor{darkgray}{rgb}{0.66, 0.66, 0.66}
\definecolor{darkjunglegreen}{rgb}{0.1, 0.14, 0.13}
\definecolor{darktaupe}{rgb}{0.28, 0.24, 0.2}
\definecolor{davy\'sgrey}{rgb}{0.33, 0.33, 0.33}
\definecolor{frenchblue}{rgb}{0.0, 0.45, 0.73}
\definecolor{almond}{rgb}{0.94, 0.87, 0.8}
\definecolor{beaublue}{rgb}{0.74, 0.83, 0.9}
\definecolor{beige}{rgb}{0.96, 0.96, 0.86}
\definecolor{bisque}{rgb}{1.0, 0.89, 0.77}
\definecolor{black}{rgb}{0.0, 0.0, 0.0}
\definecolor{fluorescentorange}{rgb}{1.0, 0.75, 0.0}
\definecolor{ghostwhite}{rgb}{0.97, 0.97, 1.0}
\definecolor{antiquewhite}{rgb}{0.98, 0.92, 0.84}
\newcolumntype{C}[1]{>{\centering\arraybackslash}p{#1}}
\newtheorem{myAttack}{Attack}
\begin{document}
	
\pagenumbering{arabic}
\newcommand{\protocol}{GuardML}
\newcommand{\twoPervPPML}{2GML}
\newcommand{\threePervPPML}{3GML}
\newcommand{\allPPML}{2GML/3GML}
\newcommand{\protocolTTT}{\texttt{GuardML}}
\newcommand{\twoPervPPMLTTT}{\texttt{2GML}}
\newcommand{\threePervPPMLTTT}{\texttt{3GML}}
\newcommand{\ecgPPML}{\texttt{ecgPPML}}

\newcommand{\algorithmautorefname}{Algorithm}
%

\title{GuardML: Efficient Privacy-Preserving Machine Learning Services Through Hybrid Homomorphic Encryption}
\renewcommand{\shorttitle}{GuardML}

\author{Eugene Frimpong}
\orcid{0000-0002-4924-5258}
\affiliation{%
  \institution{Tampere University}
  \streetaddress{Kalevantie 4}
  \city{Tampere}
  \country{Finland}
  \postcode{33100}  
}
\email{eugene.frimpong@tuni.fi}

\author{Khoa Nguyen}
\affiliation{%
  \institution{Tampere University}
  \streetaddress{Kalevantie 4}
  \city{Tampere}
  \country{Finland}
  \postcode{33100}  
}
\email{khoa.nguyen@tuni.fi}

\author{Mindaugas Budzys}
\affiliation{%
  \institution{Tampere University}
  \streetaddress{Kalevantie 4}
  \city{Tampere}
  \country{Finland}
  \postcode{33100} 
}
\email{mindaugas.budzys@tuni.fi}

\author{Tanveer Khan}
\affiliation{%
  \institution{Tampere University}
  \streetaddress{Kalevantie 4}
  \city{Tampere}
  \country{Finland}
  \postcode{33100} 
}
\email{tanveer.khan@tuni.fi}

\author{Antonis Michalas}
\affiliation{%
  \institution{Tampere University}
  \streetaddress{Kalevantie 4}
  \city{Tampere}
  \country{Finland}
  \postcode{33100} 
}
\email{antonios.michalas@tuni.fi}

\renewcommand{\shortauthors}{Frimpong et al.}
%
%
%
\begin{abstract}
Machine Learning (ML) has emerged as one of data science's most transformative and influential domains. However, the widespread adoption of ML introduces privacy-related concerns owing to the increasing number of malicious attacks targeting ML models. To address these concerns, Privacy-Preserving Machine Learning (PPML) methods have been introduced to safeguard the privacy and security of ML models. One such approach is the use of Homomorphic Encryption (HE). However, the significant drawbacks and inefficiencies of traditional HE render it impractical for highly scalable scenarios. Fortunately, a modern cryptographic scheme, Hybrid Homomorphic Encryption (HHE), has recently emerged, combining the strengths of symmetric cryptography and HE to surmount these challenges. Our work seeks to introduce HHE to ML by designing a PPML scheme tailored for end devices. We leverage HHE as the fundamental building block to enable secure learning of classification outcomes over encrypted data, all while preserving the privacy of the input data and ML model. We demonstrate the real-world applicability of our construction by developing and evaluating an HHE-based PPML application for classifying heart disease based on sensitive ECG data. Notably, our evaluations revealed a slight reduction in accuracy compared to inference on plaintext data. Additionally, both the analyst and end devices experience minimal communication and computation costs, underscoring the practical viability of our approach. The successful integration of HHE into PPML provides a glimpse into a more secure and privacy-conscious future for machine learning on relatively constrained end devices.

\end{abstract}
%
%
%

\keywords{Hybrid Homomorphic Encryption, Machine Learning as a Service, Privacy-Preserving Machine Learning}

\maketitle

\section{Introduction}
\label{sec:introduction}

Machine Learning (ML) has increasingly become one of the most impactful fields of data science in recent years, allowing various users to classify and make predictions based on multi-dimensional data. One of the main metrics for ML is the accuracy of the prediction or classification results. However, to achieve this, the results should be accompanied by a large amount of high-quality training data requiring the collaboration of several organizations. Currently, regulations such as the General Data Protection Regulations (GDPR) forbid the sharing and processing of sensitive data without the data subject's consent. It has, therefore, become crucial to uphold data privacy and confidentiality when obtaining data from other organizations. One solution to this problem is employing Privacy-Preserving Machine Learning (PPML). PPML ensures that the use of data protects user privacy and that data is utilized in a safe fashion, avoiding leakage of confidential and private information. To this end, researchers have proposed and implemented various PPML-achieving techniques, varying from secure cryptographic schemes to distributed, hybrid, and data modification approaches. This work focuses on cryptographic approaches, the most commonly used one being Homomorphic Encryption (HE)~\cite{rivest1978data,gentry2009fully}, which exhibits a high potential in ML applications.

HE allows users to perform computations such as addition or multiplication on encrypted data~\cite{rivest1978data}. One of the first fully HE (FHE) schemes was proposed by C. Gentry~\cite{gentry2009fully}. FHE allows infinite operations on encrypted data while still producing a valid result. Since Gentry's seminal work~\cite{gentry2009fully}, multiple HE schemes have been proposed to improve the efficiency and applicability of HE. Currently, the most popular schemes used are \textbf{CKKS}~\cite{cheon2017homomorphic}, \textbf{TFHE}~\cite{chillotti2016faster} and \textbf{BFV}~\cite{brakerski2012fully,fan2012somewhat}. Because these schemes allow arbitrary computation on encrypted data, FHE has created exciting new applications in several areas, such as Machine Learning as a Service (MLaaS)\cite{lou2020glyph,lee2022privacy, khan2021blind, khan2023learning, khan2023love, nguyen2023split, khan2023more, khan2023split}. In MLaaS applications, data is encrypted under an HE scheme and transferred to the cloud for processing. Furthermore, due to the versatility of HE, it can also be used to keep the model private through model parameter encryption. Despite the various advances in HE, it has yet to find mainstream use because of high computational complexity and extended ciphertext expansion, resulting in very large ciphertexts. To address these issues, researchers have turned to the Hybrid Homomorphic Encryption (HHE) 
concept ~\cite{dobraunig2021pasta,bakas2022symmetrical}, 
which uses symmetric ciphers to make HE more accessible for users.

\noindent \textbf{\textit{HHE in a Nutshell:}} In an HHE scheme, rather than encrypt data with only an HE scheme, users locally encrypt data via a symmetric key encryption scheme. Subsequently, they homomorphically encrypt the symmetric key used in the encryption process with an HE scheme. The two ciphertexts resulting from the abovementioned encryptions are then forwarded to the server. Upon receiving both ciphertexts, the server transforms the symmetric ciphertext into a homomorphic ciphertext using the homomorphically encrypted symmetric key. The use of HHE produces ciphertexts of a substantially smaller size compared to using only HE schemes because of the symmetric encryption. The reduced size drastically lowers the communication overhead a user would typically incur using only a traditional HE approach. However, due to high multiplicative depth, not all symmetric key encryption schemes are compatible with HHE. To this end, researchers have designed a number of HE-friendly symmetric ciphers for use in HHE schemes, such as HERA/Rubato~\cite{cho2021transciphering,ha2022rubato}, Elisabeth~\cite{cosseron2022towards} and PASTA~\cite{dobraunig2021pasta}. Aside from reducing ciphertext size, 
HHE also provides a way for most consumer-grade devices to benefit from the advantages of HE schemes by transferring 
computationally expensive operations to the server or cloud.

For this work, we adopt the concept of HHE, as implemented in PASTA~\cite{dobraunig2021pasta}, to address the privacy problem of machine learning prediction as a service. 
Apart from the novelty of incorporating HHE in PPML, we aim to smooth out the big hurdles of implementing strong PPML models in a wide range of devices.

\smallskip

\noindent \textbf{\textit{Contributions:}} We provide a realistic solution that carefully considers the vagaries of PPML, all while exploring the use of new technologies that might unleash creative ideas in the decades to come. Additionally, the paper makes the following 
contributions: 
\begin{enumerate}[\textbf C1.]
        \item We first demonstrate the effective utilization of HHE to tackle the challenges of PPML. By extending the use of PASTA to ML applications, our approach introduces HHE as a key element in PPML, unlocking new possibilities. Our primary goal is to overcome the significant obstacles in implementing robust PPML models across diverse devices. 
	\item We present two formally designed protocols enabling an authorized entity (e.g., an analyst) to process encrypted data as if it were unencrypted efficiently. Rigorous security proofs demonstrate that our protocols preserve user privacy, ensuring no leakage that could compromise confidentiality.
	\item Through extensive experiments, we showcase the practicality of our protocol in a real-world ML scenario using a sensitive medical dataset. The experimental results indicate that our PPML protocol achieves nearly comparable accuracy to the plaintext version while safeguarding both the dataset and the neural network's privacy. Additionally, most of the computation cost is effectively outsourced to a CSP.
 

\end{enumerate}

\section{Related Works}
\label{sec:relatedwork}
\noindent\textbf{Homomorphic Encryption:} 
Various recent works have proposed using HE schemes in implementing PPML. Gentry's work~\cite{gentry2009fully} revolutionized the field of HE and paved the way for 
multiple modern schemes, such as TFHE~\cite{chillotti2020tfhe}, BFV~\cite{fan2012somewhat}, and CKKS~\cite{cheon2017homomorphic} in PPML applications. 
BFV was one of the first improvements to the original HE scheme proposed by Gentry and works by limiting expensive bootstrapping operations. BFV is, therefore, referred to as a Somewhat Homomorphic Encryption (SHE) scheme and allows a limited amount of operation to be performed on integer ciphertexts. Cheon~\textit{et al.} also proposed the CKKS scheme, which also allows a limited number of operations to be performed on ciphertext but allows computations on floating point data~\cite{cheon2017homomorphic}. Chillotti~\textit{et al.} then proposed the TFHE~\cite{chillotti2016faster} scheme. TFHE greatly improves the efficiency of bootstrapping and allows an unlimited amount of bitwise operations on binary data. Each of the aforementioned schemes has been applied to ML applications, requiring different techniques to implement non-linear activation functions. For example, TFHE relies on fast and efficient LUT searches to compute non-linear activations~\cite{lou2020glyph}, while BFV and CKKS require polynomial approximations~\cite{hesamifard2018privacy,lee2022privacy}. Each HE scheme has been applied to PPML with high-accuracy results. Examples of TFHE-based PPML works are TAPAS~\cite{sanyal2018tapas}, FHE-DiNN~\cite{bourse2018fast} and Glyph~\cite{lou2020glyph}. An example of a CKKS-based PPML protocol is POSEIDON~\cite{sav2020poseidon}, while an example of a BFV-based PPML work is HCNN~\cite{al2020towards}. HE schemes suffer from large ciphertext sizes and high computational complexities, which make them unsuitable for every environment. Enter HHE.

\smallskip
\noindent\textbf{Hybrid Homomorphic Encryption:} 
The first approaches to designing HHE schemes relied on existing and well-established symmetric ciphers such as AES~\cite{gentry2012homomorphic}. However, AES has been proven to not be a good fit for HHE schemes, primarily due to its large multiplicative depth~\cite{dobraunig2021pasta}. Thus, research in the field of HHE took a different approach, where the main focus shifted to the design of symmetric ciphers with different optimization criteria, 
such as eliminating the ciphertext expansion~\cite{canteaut2018stream} or using filter permutators~\cite{meaux2019improved}. However, to date, HHE has seen limited practical application~\cite{bakas2022symmetrical} in real-world applications, and only a handful of works exist in the field of PPML. To the best of our knowledge, the \textit{main} HHE schemes currently are HERA~\cite{cho2021transciphering}, Elisabeth~\cite{cosseron2022towards} and PASTA~\cite{dobraunig2021pasta}. The authors of HERA also proposed Rubato~\cite{ha2022rubato}; however, the specifications remain largely the same as in HERA. These proposed approaches have different specifications and can be applied to different use cases. HERA~\cite{cho2021transciphering} is a stream cipher based on the CKKS HE scheme and allows computations on floating point data types. In comparison, Elisabeth~\cite{cosseron2022towards} is designed to utilize the TFHE scheme, while PASTA~\cite{dobraunig2021pasta} is based on BFV for integer data types.

\begin{table}[ht!]
\scriptsize
\centering
\begin{tabular}{V{4}C{1.8cm} V{3}C{1.5cm}|C{1,5cm}|C{2cm}V{4}} 
 \hlineB{4}
 \rowcolor{Gray}
 \cellcolor{azure(web)(azuremist)} & \textbf{HERA}~\cite{cho2021transciphering} & \textbf{Elisabeth}~\cite{cosseron2022towards} & \textbf{PASTA}~\cite{dobraunig2021pasta} \\ 
 \hlineB{3}
 \cellcolor{Gray}\textbf{Programming Language} & Go & Rust & C \\ 
 \hline
 \cellcolor{Gray}\textbf{Data Type} & Floating point & Binary & Integer \\ 
 \hline
 \cellcolor{Gray}\textbf{HE Scheme} & CKKS & TFHE & BFV\\ 
 \hline
 \cellcolor{Gray}\textbf{Application in ML} & No & Yes & No \\
 \hline
 \cellcolor{Gray}\textbf{Defined over} & $\mathbb{Z}_\mathsf{q}$, where $\mathsf{q} > 2^{16}$ & $\mathbb{Z}_\mathsf{q}$, where $\mathsf{q} = 2^{4}$ &$\mathbb{F}_\mathsf{p}$, where $\mathsf{p}$ is a~16-bit prime \\ 
 \hline
 \cellcolor{Gray}\textbf{Security Level} & 80 or 128-bit & 128-bit & 128-bit \\ 
 \hline
 \cellcolor{Gray}\textbf{Quantization} & No & Yes & Yes \\ 
 \hlineB{3}
\end{tabular}
\caption{Comparison of different HHE schemes}
\label{tab:comp-hhe-schemes}
\end{table}

HERA and PASTA are defined over $\mathbb{Z}_\mathsf{q}$, where $\mathsf{q} = 2^{16} + 1$, and can store up to 16-bit inputs. Meanwhile, Elisabeth is defined over $\mathbb{Z}_\mathsf{q}$, where $\mathsf{q} = 2^4$, and can store up to 4 bits of data. Each approach achieves the same security level of~128-bits. Additionally, HERA also provides tests for a security level of 80 bits. As HERA allows computations on floating point data types, it does not require quantization on certain inputs. Elisabeth and PASTA, on the other hand, require quantization to operate on floating point numbers, which introduces a rounding error, which can reduce the accuracy of certain applications, i.e.\ ML. To the best of our knowledge, HERA has yet to be applied to an ML application. One of the use cases provided by Elisabeth is a CNN classification task on the Fashion-MNIST dataset and shows equivalent accuracy~($84.18\%$) to the cleartext model without HE. \autoref{tab:comp-hhe-schemes} provides an overview of the different popular HHE schemes. For this work, we aim to evaluate the suitability of PASTA when applied to PPML and investigate the feasibility of applying PASTA to various real-world ML datasets, particularly medical datasets, with the potential to achieve comparable accuracy and higher efficiency than conventional HE techniques.

\section{Preliminaries}
\label{sec: preliminaries}

\subsection{Homomorphic Encryption}
\label{subsec:he}
\begin{definition}[Homomorphic Encryption]
Let $\mathsf{HE}$ be a (public-key) homomorphic encryption scheme with a quadruple of PPT algorithms $\mathsf{HE = (KeyGen, } \allowbreak \mathsf{Enc, Dec, Eval)}$ such that:
\end{definition}
\begin{itemize}
\item $\mathbf{HE.KeyGen:}$ The key generation algorithm $\left(\mathsf{pk, evk, sk}\right) \leftarrow \mathsf{HE.KeyGen}(1^{\lambda})$ takes as input a unary representation of the security parameter $\lambda$, and outputs  a public key $\mathsf{pk}$, an evaluation key $\mathsf{evk}$ and a private key $\mathsf{sk}$.
\item $\mathbf{HE.Enc:}$ The encryption algorithm ${c \leftarrow \mathsf{HE.Enc}(\mathsf{pk}, x)}$ takes as input the public key $\mathsf{pk}$ and a message $x$ and outputs a ciphertext $c$.
\item $\mathbf{HE.Eval:}$ The algorithm $c_f \leftarrow \mathsf{HE.Eval}({\mathsf{evk}}, f, c_1, \dots, c_n)$ takes as input the evaluation key $\mathsf{evk}$, a function $f$, and a set of $n$ ciphertexts, and outputs a ciphertext $c_f$.
\item $\mathbf{HE.Dec:}$ The decryption algorithm $\mathsf{HE.Dec}({\mathsf{sk}}, c) \rightarrow x$, takes as input the secret key $\mathsf{sk}$ and a ciphertext $c$, and outputs a plaintext $x$.
\end{itemize}

\subsection{Hybrid Homomorphic Encryption}
\label{subsec: hhe}

\begin{definition}[Hybrid Homomorphic Encryption] Let $\mathsf{HE}$ be a Homomorphic Encryption scheme and $\mathsf{SKE} = (\mathsf{Gen, Enc, Dec})$ be a symmetric-key encryption scheme. Moreover, let $\mathcal{X} = (x_1, \dots, x_n)$ be the message space and $\lambda$ the security parameter. An $\mathsf{HHE}$ scheme 
consists of five PPT algorithms $\mathsf{HHE = (KeyGen, Enc, Decomp, Eval,}$ $\mathsf{Dec)}$ such that: 
\end{definition}
			 

		
			 

\begin{itemize}
		\item $\mathbf{HHE.KeyGen}$: The key generation algorithm takes as input a security parameter $\lambda$ and outputs a HE public/private key pair ($\mathsf{pk}$/$\mathsf{sk}$) and a HE evaluation key ($\mathsf{evk}$).
		
		\item $\mathbf{HHE.Enc}$: The encryption algorithm consists of three steps:
		\begin{itemize}
			\item $\mathsf{SKE.Gen}$: The SKE generation algorithm takes as input the security parameter $\lambda$ and outputs a symmetric key $\mathsf{K}$.

                \item $\mathsf{HE.Enc}$: An HE encryption algorithm that takes as input $\mathsf{pk}$ and $\mathsf{K}$, and outputs $c_\mathsf{K}$ -- a homomorphically encrypted representation of the symmetric key $\mathsf{K}$.
			\item $\mathsf{SKE.Enc}$: The SKE encryption algorithm takes as input a message $x$ and $\mathsf{K}$ and outputs a ciphertext $c$.
		\end{itemize}
		\item $\mathbf{HHE.Decomp}$: This algorithm takes as an input the evaluation key $\mathsf{evk}$, the symmetrically encrypted ciphertext $c$, and the homomorphically encrypted symmetric key $c_\mathsf{K}$, and outputs $c'$ -- a homomorphic encryption of the 
        message $x$.       
		\item $\mathbf{HHE.Eval}$: This algorithm takes as input $n$ homomorphic ciphertexts $c'_n$, where $n \geq 2$, the evaluation key $\mathsf{evk}$ and a homomorphic function $f$, and outputs a ciphertext $c'_{eval}$ of the evaluation results.
 
		\item $\mathbf{HHE.Dec}$: The decryption algorithm takes as input a private key $\mathsf{sk}$ and the evaluated ciphertext $c'_{eval}$ and outputs $f(x)$.  
	\end{itemize} 


The correctness of an HHE scheme follows directly from the correctness of the underlying public-key HE scheme. 


\subsection{Machine Learning}
\label{subsec:prelim_ml}
ML is a set of algorithms that leverage already-available training data 
as input to train a model $f(\boldsymbol{\theta})$. The process of training is to find the optimal parameters $\boldsymbol{\theta} = (w, b)$, where $w$ are weights matrices and $b$ are biases that can provide accurate predictions on the training data. Once trained, $f(\boldsymbol{\theta})$ can be used to provide predictions on unseen input data, which is called the ``inference'' or ``prediction'' phase. In this work, we leverage HHE to build PPML protocols that preserve data and model privacy in 
inference phase. 

\section{System Model}
\label{sec:architecture}
In this section, we introduce our system model by explicitly describing our protocol's 
main entities and 
their capabilities. 


\begin{itemize}
	\item \textbf{User}: Let $\mathcal{U} = \{u_1, \ldots, u_n\}$ be the set of all users. Each user generates a unique symmetric key $\mathsf{K_i}$ locally and encrypts their data. Then the generated ciphertexts are outsourced to the $\mathbf{CSP}$ along with an HE encryption $c_\mathsf{K_i}$ of the underlying symmetric key.

	\item \textbf{Cloud Service Provider (CSP)}: 
 Primarily responsible for gathering symmetrically encrypted data from multiple users. The $\mathbf{CSP}$ is tasked with converting the symmetrically encrypted data into homomorphic ciphertexts and, upon request, performing blind operations on them. 
    
	\item \textbf{Analyst (A)}: In one of our proposed protocols, there exists an analyst who owns an ML model and is interested in learning the output of ML operations on the encrypted data stored at the $\mathbf{CSP}$. In this protocol, $\mathbf{A}$ decrypts the encrypted data from the HE evaluation of collected encrypted data and, thus, may gain insights from user data.
\end{itemize}

		

\section{GuardML}
\label{sec:protocol}
In this section, we discuss in detail the construction of \protocolTTT{} -- our Hybrid Homomorphic Privacy-Preserving protocols that constitute the core of this paper. \protocolTTT{} is comprised of two protocols, \twoPervPPMLTTT{} and \threePervPPMLTTT{}. The primary differences between the protocols mentioned above lie in the presence of an \textbf{analyst} and support for a \textbf{multi-user} scenario. Each protocol may be suitable for different use cases. For example, \twoPervPPMLTTT{} can be used in commercial ML applications where the \textbf{CSP}, such as AWS, Azure, etc., owns the ML models. While \threePervPPMLTTT, on the other hand, would be ideal for a setting where the analyst is 
owner of the 
model and does not wish to reveal the contents to 
\textbf{CSP}. In this setting, the \textbf{CSP}'s role is reduced to simply performing operations on encrypted data and models.   



\medskip
\noindent\textbf{\textit{Building Blocks:}} 
\label{subsubsec:buildingblocks}
Before proceeding to describe each protocol, we first define the building blocks 
used in our constructions. 

\begin{itemize}
	\item A secure symmetric cipher $\mathsf{SKE = (Gen, Enc, Dec)}$.
	\item A BFV-based HHE scheme $\mathsf{HHE = (KeyGen, Enc, Dec, Deco}$ $\mathsf{mp, Eval)}$.
\end{itemize}

Additionally, to provide secure communication, 
we define a public key encryption scheme, which supports message encryption and decryption, a signature scheme used for message signing and verification, and a secure cryptographic hash function to verify message integrity. We make the following assumptions: 

\begin{itemize}
	\item A CCA2 secure public-key encryption scheme $\mathsf{PKE = (Gen,}$ $\mathsf{Enc, Dec)}$.
	\item An EUF-CMA secure signature scheme $\mathsf{\sigma = (sign, ver)}$.    
	\item A first and second pre-image resistant cryptographic hash function $\mathsf{H(\cdot)}$.
\end{itemize}



\subsection{GuardML: 2-Party Setting}
\label{subsec:2party}
\noindent \textbf{\textit{High-Level Overview:}} \enskip The first version of \texttt{GuardML} is \twoPervPPMLTTT{} -- a 2-party protocol that consists of a $\mathbf{CSP}$ and a user $u_{i}$. 
In this setting, we assume that the $\mathbf{CSP}$ is the owner of a trained ML model with parameters $(w, b)$, while $u_{i}$ provides the input data $x_i$ to the model. Initially, the user $u_i$ generates the necessary HHE keys ($\mathsf{pk_{u_i}, sk_{u_i}, evk_{u_i}})$. 
It then publishes the public key $\mathsf{pk_{u_i}}$ and sends the evaluation key $\mathsf{evk_{u_i}}$ to the $\mathbf{CSP}$. Subsequently, $u_{i}$ generates a unique symmetric key $\mathsf{K_{i}}$. On completing the key generation phase, $u_{i}$ begins the data upload phase by first generating $c_\mathsf{K_i}$, a homomorphic encryption of the symmetric key $\mathsf{K_{i}}$, and then calculates a symmetric encryption of the data $x_{i}$ with $\mathsf{K_{i}}$. The data upload phase concludes with the sending of both ciphertext values to the $\mathbf{CSP}$. 
Upon reception, the $\mathbf{CSP}$ begins the secure evaluation phase by first transforming the symmetrically encrypted data $c_{x_i}$ into a homomorphic ciphertext $c'_{x_i}$. On successful run, $\mathbf{CSP}$ uses $\mathsf{evk_{u_i}}$, $c'_{x_i}$ and $f$ to produce an encrypted prediction $c_{res}$. The encrypted result is then sent to $u_i$ for decryption. 
\medskip

\noindent\textbf{\textit{Formal Construction:}} 
\label{subsubsec:twopartyformal}
\autoref{fig:twoparty} provides an overview of the \twoPervPPMLTTT{} protocol, which is divided into four distinct phases, namely \texttt{\twoPervPPML.Se} \texttt{tup}, \texttt{\twoPervPPML.Upload}, \texttt{\twoPervPPML.Eval}, and \texttt{\twoPervPPML.Classify}. 

\smallskip

\noindent\framebox{$\mathsf{\mathbf{\twoPervPPML.Setup}}$:}
In the setup phase, both parties generate their respective signing/verification key pairs for the signature scheme $\sigma$ and publish the verification keys. The \textbf{CSP} runs the $\mathsf{PKE.Gen}$ algorithm 
to generate a public/private key pair $(\mathsf{pk_{CSP}, sk_{CSP}})$. On the other hand, $u_{i}$ runs the $\mathsf{HHE.KeyGen}$ algorithm 
to generate the public, private and evaluation keys $(\mathsf{pk_{u_i}, sk_{u_i}, evk_{u_i}})$ used for HHE operations. Finally, $u_{i}$ publishes $\mathsf{pk_{u_i}}$ and outsources $\mathsf{evk_{u_i}}$ to the \textbf{CSP} through \textit{$m_{1}$}.
\begin{equation}
	m_{1} = \langle t_{1}, \mathsf{Enc(pk_{CSP}, \mathsf{evk_{u_i}})}, \sigma_{u_i}(\mathsf{H}(t_{1} || \mathsf{evk_{u_i}}))\rangle , \nonumber
\end{equation}
where $\sigma_{u_i}$ is a signature created by $u_{i}$. Upon reception, the \textbf{CSP} verifies the signature 
by using $u_{i}$’s verification key and the freshness of the message through the timestamp. If the verification fails, the \textbf{CSP} aborts the protocol and outputs $\perp$. Otherwise, the \textbf{CSP} stores $\mathsf{evk_{u_i}}$ locally.

\smallskip

\noindent\framebox{$\mathsf{\mathbf{\twoPervPPML.Upload}}$:}
In this phase, $u_{i}$ runs the $\mathsf{HHE.Enc}$ algorithm to generate a symmetric encryption key $\mathsf{K_{i}}$ through $\mathsf{SKE.Gen}$ and then encrypts the plaintext data $x_i$ into ciphertext $c_{x_i}$ using the $\mathsf{SKE.Enc}$ algorithm, which takes $(\mathsf{K_i}, x_i)$ as an input. After encrypting the data, $u_{i}$ also runs $\mathsf{HE.Enc}$ to homomorphically encrypt $\mathsf{K_i}$ into $c_{\mathsf{K_i}}$ with their $\mathsf{pk_{u_i}}$. After both encryptions are finished $u_i$ sends both ciphertexts to the \textbf{CSP} with \textit{${m_{2}}$}: 
\begin{equation}
	m_{2} = \langle t_{2}, c_{x_i}, c_{\mathsf{K_i}}, \sigma_{u_i}(\mathsf{H}(t_{2} || c_{x_i} || c_{\mathsf{K_i}}))\rangle , \nonumber
\end{equation}
On receiving $m_2$, the \textbf{CSP} verifies the signature by using $u_{i}$’s verification key and the freshness of the message through the timestamp. If the verification fails, the \textbf{CSP} aborts the protocol and outputs $\perp$. Otherwise, the \textbf{CSP} continues to the secure evaluation phase. 

\smallskip

\noindent\framebox{$\mathsf{\mathbf{\twoPervPPML.Eval}}$:}
The secure evaluation phase begins 
with the \textbf{CSP} transforming the received symmetric ciphertext $c_{x_i}$ into HE ciphertext $c'_{x_i}$ by running the $\mathsf{HHE.Decomp}$ algorithm. $\mathsf{HHE.Decomp}$ uses $\mathsf{HE.Eval}$ which takes as an input ($\mathsf{evk_{u_i}}, \mathsf{SKE.Dec}, c_\mathsf{K_i}, c_{x_i}$), where $\mathsf{SKE.Dec}$ is the symmetric cipher decryption algorithm to transform the ciphertext. Afterwards, the \textbf{CSP} takes $c'_{x_i}$ and their ML model parameters $(w, b)$ and inputs both of them into the $\mathsf{HHE.Eval}$ along with the $\mathsf{evk_{u_i}}$ to compute an encrypted prediction $c_{res}$ from the encrypted data. Finally, the \textbf{CSP} securely sends the encrypted result back to $u_i$ via $m_{3}$: 
\begin{equation}
	m_{3} = \langle t_{3}, c_{res}, \sigma_{CSP}(\mathsf{H}(t_{3} || c_{res}))\rangle , \nonumber
\end{equation}
Upon reception, $u_{i}$ verifies the integrity and the freshness of the message. 
If the verification fails, $u_i$ aborts the protocol and outputs $\perp$. Otherwise, 
continues to the final phase.

\smallskip 

\noindent\framebox{$\mathsf{\mathbf{\twoPervPPML.Classify}}$:}
In the final phase of the protocol, $u_i$ decrypts the received ciphertext to gain insight into the data. This is done by $u_{i}$ running the $\mathsf{HHE.Dec}$ algorithm with inputs $\mathsf{sk_{u_i}}, c_{res}$ and outputs the prediction $res$. 

\begin{figure}[ht!]
	\centering
	\resizebox{\linewidth}{!}{%
		\tikzset{every picture/.style={line width=0.75pt}}        
		\begin{tikzpicture}[x=0.75pt,y=0.75pt,yscale=-1,xscale=1]
			\draw    (97,411) -- (225.5,411) -- (421.5,411) ;
			\draw [shift={(424.5,411)}, rotate = 180] [fill={rgb, 255:red, 0; green, 0; blue, 0 }  ][line width=0.08]  [draw opacity=0] (8.93,-4.29) -- (0,0) -- (8.93,4.29) -- cycle    ;
			
			\draw    (100.5,693) -- (425.5,693) ;
			\draw [shift={(97.5,693)}, rotate = 0] [fill={rgb, 255:red, 0; green, 0; blue, 0 }  ][line width=0.08]  [draw opacity=0] (8.93,-4.29) -- (0,0) -- (8.93,4.29) -- cycle    ;
			\draw [color={rgb, 255:red, 0; green, 0; blue, 0 }  ,draw opacity=1 ][line width=1.5]  [dash pattern={on 1.69pt off 2.76pt}]  (48.5,272) -- (517.5,272) ;
			\draw  [dash pattern={on 4.5pt off 4.5pt}]  (87.5,249) -- (87.5,294) ;
			\draw  [dash pattern={on 4.5pt off 4.5pt}]  (432.5,245) -- (434.5,412) ;
			\draw   (56,213) -- (124,213) -- (124,248.5) -- (56,248.5) -- cycle ;
			\draw   (400,210) -- (467.06,210) -- (467.06,245.5) -- (400,245.5) -- cycle ;
			\draw   (79,292) -- (96.5,292) -- (96.5,811) -- (79,811) -- cycle ;
			\draw   (426,408) -- (443.5,408) -- (443.5,810) -- (426,810) -- cycle ;
			\draw    (97,298) -- (158.5,298) ;
			\draw    (158.5,298) -- (158.5,319) ;
			\draw    (112.5,319) -- (158.5,319) ;
			\draw [shift={(109.5,319)}, rotate = 0] [fill={rgb, 255:red, 0; green, 0; blue, 0 }  ][line width=0.08]  [draw opacity=0] (8.93,-4.29) -- (0,0) -- (8.93,4.29) -- cycle    ;
			\draw   (96.5,319) -- (109.5,319) -- (109.5,352) -- (96.5,352) -- cycle ;
			
			\draw    (97,419) -- (158.5,419) ;
			
			\draw    (158.5,419) -- (158.5,440) ;
			
			\draw    (112.5,440) -- (158.5,440) ;
			\draw [shift={(109.5,440)}, rotate = 0] [fill={rgb, 255:red, 0; green, 0; blue, 0 }  ][line width=0.08]  [draw opacity=0] (8.93,-4.29) -- (0,0) -- (8.93,4.29) -- cycle    ;
			
			\draw   (96.5,440) -- (109.5,440) -- (109.5,466) -- (96.5,466) -- cycle ;
			\draw  [dash pattern={on 0.84pt off 2.51pt}]  (108.5,352) -- (160.5,352) ;
			\draw  [dash pattern={on 0.84pt off 2.51pt}]  (160.5,352) -- (160.5,373) ;
			
			\draw  [dash pattern={on 0.84pt off 2.51pt}]  (100.5,372) -- (159.5,372) ;
			\draw [shift={(97.5,372)}, rotate = 0] [fill={rgb, 255:red, 0; green, 0; blue, 0 }  ][line width=0.08]  [draw opacity=0] (8.93,-4.29) -- (0,0) -- (8.93,4.29) -- cycle    ;
			\draw    (97,516) -- (225.5,516) -- (421.5,516) ;
			\draw [shift={(424.5,516)}, rotate = 180] [fill={rgb, 255:red, 0; green, 0; blue, 0 }  ][line width=0.08]  [draw opacity=0] (8.93,-4.29) -- (0,0) -- (8.93,4.29) -- cycle    ;
			\draw  [dash pattern={on 0.84pt off 2.51pt}]  (108.5,466) -- (160.5,466) ;
			\draw  [dash pattern={on 0.84pt off 2.51pt}]  (160.5,466) -- (160.5,487) ;
			\draw  [dash pattern={on 0.84pt off 2.51pt}]  (100.5,486) -- (159.5,486) ;
			\draw [shift={(97.5,486)}, rotate = 0] [fill={rgb, 255:red, 0; green, 0; blue, 0 }  ][line width=0.08]  [draw opacity=0] (8.93,-4.29) -- (0,0) -- (8.93,4.29) -- cycle    ;
			
			\draw    (444,532) -- (505.5,532) ;
			\draw    (505.5,532) -- (505.5,553) ; 
			\draw    (459.5,553) -- (505.5,553) ;
			\draw [shift={(456.5,553)}, rotate = 0] [fill={rgb, 255:red, 0; green, 0; blue, 0 }  ][line width=0.08]  [draw opacity=0] (8.93,-4.29) -- (0,0) -- (8.93,4.29) -- cycle    ;
			\draw   (443.5,553) -- (456.5,553) -- (456.5,579) -- (443.5,579) -- cycle ;
			\draw  [dash pattern={on 0.84pt off 2.51pt}]  (455.5,579) -- (507.5,579) ;
			\draw  [dash pattern={on 0.84pt off 2.51pt}]  (507.5,579) -- (507.5,600) ;
			\draw  [dash pattern={on 0.84pt off 2.51pt}]  (447.5,599) -- (506.5,599) ;
			\draw [shift={(444.5,599)}, rotate = 0] [fill={rgb, 255:red, 0; green, 0; blue, 0 }  ][line width=0.08]  [draw opacity=0] (8.93,-4.29) -- (0,0) -- (8.93,4.29) -- cycle    ;
			\draw    (444,620) -- (505.5,620) ;
			\draw    (505.5,620) -- (505.5,641) ;
			\draw    (459.5,641) -- (505.5,641) ;
			\draw [shift={(456.5,641)}, rotate = 0] [fill={rgb, 255:red, 0; green, 0; blue, 0 }  ][line width=0.08]  [draw opacity=0] (8.93,-4.29) -- (0,0) -- (8.93,4.29) -- cycle    ;
			\draw   (443.5,641) -- (456.5,641) -- (456.5,667) -- (443.5,667) -- cycle ;
			\draw  [dash pattern={on 0.84pt off 2.51pt}]  (455.5,667) -- (507.5,667) ;
			\draw  [dash pattern={on 0.84pt off 2.51pt}]  (507.5,667) -- (507.5,688) ;
			\draw  [dash pattern={on 0.84pt off 2.51pt}]  (447.5,687) -- (506.5,687) ;
			\draw [shift={(444.5,687)}, rotate = 0] [fill={rgb, 255:red, 0; green, 0; blue, 0 }  ][line width=0.08]  [draw opacity=0] (8.93,-4.29) -- (0,0) -- (8.93,4.29) -- cycle    ;
			\draw    (97,702) -- (158.5,702) ;
			\draw    (158.5,702) -- (158.5,723) ;
			\draw    (112.5,723) -- (158.5,723) ;
			\draw [shift={(109.5,723)}, rotate = 0] [fill={rgb, 255:red, 0; green, 0; blue, 0 }  ][line width=0.08]  [draw opacity=0] (8.93,-4.29) -- (0,0) -- (8.93,4.29) -- cycle    ;
			\draw   (96.5,723) -- (109.5,723) -- (109.5,749) -- (96.5,749) -- cycle ;
			\draw  [dash pattern={on 0.84pt off 2.51pt}]  (108.5,749) -- (160.5,749) ;
			\draw  [dash pattern={on 0.84pt off 2.51pt}]  (160.5,749) -- (160.5,770) ;
			\draw  [dash pattern={on 0.84pt off 2.51pt}]  (100.5,769) -- (159.5,769) ;
			\draw [shift={(97.5,769)}, rotate = 0] [fill={rgb, 255:red, 0; green, 0; blue, 0 }  ][line width=0.08]  [draw opacity=0] (8.93,-4.29) -- (0,0) -- (8.93,4.29) -- cycle    ;
			\draw  [dash pattern={on 0.84pt off 2.51pt}]  (30.5,292) -- (79,292) ;
			\draw  [dash pattern={on 0.84pt off 2.51pt}]  (30.5,413) -- (79,413) ;
			\draw  [dash pattern={on 0.84pt off 2.51pt}]  (26.5,517) -- (75,517) ;
			\draw  [dash pattern={on 0.84pt off 2.51pt}]  (29.5,693) -- (78,693) ;
			\draw  [dash pattern={on 0.84pt off 2.51pt}]  (30.5,811) -- (79,811) ;
			\draw (418,222) node [anchor=north west][inner sep=0.75pt]   [align=left] {\textbf{CSP}};
			\draw (75,224) node [anchor=north west][inner sep=0.75pt]   [align=left] {\textbf{User}};
			
			\draw (163,301) node [anchor=north west][inner sep=0.75pt]   [align=left] {Run$\displaystyle \ \mathsf{HHE.KeyGen}$};
			\draw (164,353) node [anchor=north west][inner sep=0.75pt]   [align=left] {$\displaystyle (\mathsf{pk_{u_i} ,sk_{u_i} ,evk_{u_i}}) \ $};
			
			\draw (135,385.4) node [anchor=north west][inner sep=0.75pt]    {$m_{1} =\ \langle t_{1} ,\ \mathsf{Enc( pk_{CSP},\ evk_{u_i})} ,\ \sigma _{u_i}( \mathsf{H}( t_{1} \ || \ \mathsf{evk_{u_i}})) \rangle $};
			\draw (146,492.4) node [anchor=north west][inner sep=0.75pt]    {$m_{2} =\ \langle t_{2} ,\ c_{x_i},\ c_{\mathsf{K_i}} ,\ \sigma _{u_i}( \mathsf{H}( t_{2} \ || \ c_{x_i}\ || \ c_{\mathsf{K_i}})) \rangle $};
			\draw (163,423) node [anchor=north west][inner sep=0.75pt]   [align=left] {Run$\displaystyle \ \mathsf{HHE.Enc}$};
			\draw (164,469) node [anchor=north west][inner sep=0.75pt]   [align=left] {$\displaystyle \mathsf{K_i} ,\ c_{x_i},\ c_\mathsf{K_i}$};
			\draw (507.5,535) node [anchor=north west][inner sep=0.75pt]   [align=left] {Run$\displaystyle \ \mathsf{HHE.Decomp}$};
			\draw (507.5,623) node [anchor=north west][inner sep=0.75pt]   [align=left] {Run$\displaystyle \ \mathsf{HHE.Eval}$};
			\draw (513,672) node [anchor=north west][inner sep=0.75pt]   [align=left] {$\displaystyle c_{res}$};
			\draw (513,579) node [anchor=north west][inner sep=0.75pt]   [align=left] {$\displaystyle c'_{x_i}$};
			\draw (156,665.4) node [anchor=north west][inner sep=0.75pt]    {$m_{3} =\ \langle t_{3} ,\ c_{res} ,\ \sigma _{CSP}( \mathsf{H}( t_{3} \ || \ c_{res} ))\rangle $};
			\draw (163,706) node [anchor=north west][inner sep=0.75pt]   [align=left] {Run$\displaystyle \ \mathsf{HHE.Dec}$};
			\draw (164,755) node [anchor=north west][inner sep=0.75pt]   [align=left] {$\displaystyle{res}$};
			
			\draw (37.19,370.44) node [anchor=north west][inner sep=0.75pt]  [rotate=-270.41] [align=left] {\textbf{Setup}};
			
			\draw (37.19,490.44) node [anchor=north west][inner sep=0.75pt]  [rotate=-270.41] [align=left] {\textbf{Upload}};
			
			\draw (37.19,610.44) node [anchor=north west][inner sep=0.75pt]  [rotate=-270.41] [align=left] {\textbf{Eval}};
			\draw (37.19,780.44) node [anchor=north west][inner sep=0.75pt]  [rotate=-270.41] [align=left] {\textbf{Classify}};
	\end{tikzpicture}}
	\caption{$\mathsf{\twoPervPPML}$}
	\label{fig:twoparty}
\end{figure}

\subsection{GuardML: 3-Party Setting}
\label{subsec:3party}
\noindent \textbf{\textit{High-Level Overview:}} \enskip We note that \twoPervPPMLTTT{} is unsuitable for multi-user scenarios, where data is collected from multiple users and stored at the $\mathbf{CSP}$. In such a scenario, each user would be required to generate a unique set of HHE keys, while the $\mathbf{CSP}$ would need to store the $\mathsf{evk_{u_i}}$ of each user to run the protocol successfully. To resolve this, we present \threePervPPMLTTT{} -- an extended version of \twoPervPPMLTTT{} for the multi-client model. 
\threePervPPMLTTT{} consists of three parties: a set of users $\mathcal{U}$, \textbf{CSP}, and an analyst \textbf{A}. The protocol's steps are primarily the same as in \twoPervPPMLTTT{} (\autoref{subsec:2party}). The fundamental distinction is that under the three-party setting, the HHE keys are generated by \textbf{A} instead of $u_{i}$. In the \threePervPPMLTTT{} setting, we assume that an analyst \textbf{A} owns the ML model with parameters $(w, b)$, while a user $u_i$ provides the data $x_i$. \textbf{A} generates the required HHE keys ($\mathsf{pk_{A}, sk_{A}, evk_{A}}$), publishes $\mathsf{pk_{A}}$, and sends $\mathsf{evk_{A}}$ to the \textbf{CSP}. Each $u_{i}$ generates a symmetric key $\mathsf{K_{i}}$ and encrypts their data $x_{i}$ locally to output a symmetric ciphertext $c_{x_i}$. Additionally, $u_{i}$ also generates $c_{\mathsf{K_{i}}}$ -- a homomorphic encryption of the symmetric key $\mathsf{K_{i}}$ using \textbf{A}'s public key and sends both the encrypted data $c_{x_i}$ and $c_{\mathsf{K_{i}}}$ to the \textbf{CSP}. Upon reception, the \textbf{CSP} stores the values locally. In the evaluation phase, \textbf{A} can request a prediction to be performed on the stored $c'_{x_i}$ by first sending to the \textbf{CSP} their homomorphically encrypted pre-trained ML model parameters $(c_{w}, c_{b})$. On receiving the request from \textbf{A}, the \textbf{CSP} transforms $c_{x_i}$ into homomorphic cihpertext $c'_{x_i}$. \textbf{CSP} produces an encrypted result $c_{res}$ and sends the results back to \textbf{A}. Finally, \textbf{A} decrypts the prediction result $res$ using $\mathsf{sk_A}$.

\begin{figure}[ht!]
        \centering
	\resizebox{\linewidth}{!}{%
		\tikzset{every picture/.style={line width=0.75pt}} 
		\begin{tikzpicture}[x=0.75pt,y=0.75pt,yscale=-1,xscale=1]
			\draw    (117,431) -- (245.5,431) -- (441.5,431) ;
			\draw [shift={(444.5,431)}, rotate = 180] [fill={rgb, 255:red, 0; green, 0; blue, 0 }  ][line width=0.08]  [draw opacity=0] (8.93,-4.29) -- (0,0) -- (8.93,4.29) -- cycle    ;
			\draw    (120.5,713) -- (445.5,713) ;
			\draw [shift={(117.5,713)}, rotate = 0] [fill={rgb, 255:red, 0; green, 0; blue, 0 }  ][line width=0.08]  [draw opacity=0] (8.93,-4.29) -- (0,0) -- (8.93,4.29) -- cycle    ;
			\draw [color={rgb, 255:red, 0; green, 0; blue, 0 }  ,draw opacity=1 ][line width=1.5]  [dash pattern={on 1.69pt off 2.76pt}]  (68.5,292) -- (424,292) -- (805.5,291) ;
			\draw  [dash pattern={on 4.5pt off 4.5pt}]  (107.5,269) -- (107.5,314) ;
			\draw  [dash pattern={on 4.5pt off 4.5pt}]  (452.5,265) -- (454.5,432) ;
			\draw   (76,233) -- (144,233) -- (144,268.5) -- (76,268.5) -- cycle ; 
			\draw   (420,230) -- (487.06,230) -- (487.06,265.5) -- (420,265.5) -- cycle ;
			\draw   (99,312) -- (116.5,312) -- (116.5,831) -- (99,831) -- cycle ;
			\draw   (446,428) -- (463.5,428) -- (463.5,830) -- (446,830) -- cycle ;
			\draw    (117,318) -- (178.5,318) ;
			\draw    (178.5,318) -- (178.5,339) ;
			\draw    (132.5,339) -- (178.5,339) ;
			\draw [shift={(129.5,339)}, rotate = 0] [fill={rgb, 255:red, 0; green, 0; blue, 0 }  ][line width=0.08]  [draw opacity=0] (8.93,-4.29) -- (0,0) -- (8.93,4.29) -- cycle    ;
			\draw   (116.5,339) -- (129.5,339) -- (129.5,372) -- (116.5,372) -- cycle ; 
			\draw    (763,435) -- (824.5,435) ;
			\draw    (824.5,435) -- (824.5,456) ; 
			\draw    (778.5,456) -- (824.5,456) ;
			\draw [shift={(775.5,456)}, rotate = 0] [fill={rgb, 255:red, 0; green, 0; blue, 0 }  ][line width=0.08]  [draw opacity=0] (8.93,-4.29) -- (0,0) -- (8.93,4.29) -- cycle    ;
			\draw   (762.5,456) -- (775.5,456) -- (775.5,482) -- (762.5,482) -- cycle ;
			\draw  [dash pattern={on 0.84pt off 2.51pt}]  (128.5,372) -- (180.5,372) ;
			\draw  [dash pattern={on 0.84pt off 2.51pt}]  (180.5,372) -- (180.5,393) ;
			\draw  [dash pattern={on 0.84pt off 2.51pt}]  (120.5,392) -- (179.5,392) ;
			\draw [shift={(117.5,392)}, rotate = 0] [fill={rgb, 255:red, 0; green, 0; blue, 0 }  ][line width=0.08]  [draw opacity=0] (8.93,-4.29) -- (0,0) -- (8.93,4.29) -- cycle    ;
			\draw  [dash pattern={on 0.84pt off 2.51pt}]  (774.5,482) -- (826.5,482) ;
			\draw  [dash pattern={on 0.84pt off 2.51pt}]  (826.5,482) -- (826.5,503) ;
			\draw  [dash pattern={on 0.84pt off 2.51pt}]  (766.5,502) -- (825.5,502) ;
			\draw [shift={(763.5,502)}, rotate = 0] [fill={rgb, 255:red, 0; green, 0; blue, 0 }  ][line width=0.08]  [draw opacity=0] (8.93,-4.29) -- (0,0) -- (8.93,4.29) -- cycle    ;
			\draw    (464,552) -- (525.5,552) ;
			\draw    (525.5,552) -- (525.5,573) ;
			\draw    (479.5,573) -- (525.5,573) ;
			\draw [shift={(476.5,573)}, rotate = 0] [fill={rgb, 255:red, 0; green, 0; blue, 0 }  ][line width=0.08]  [draw opacity=0] (8.93,-4.29) -- (0,0) -- (8.93,4.29) -- cycle    ;
			\draw   (463.5,573) -- (476.5,573) -- (476.5,599) -- (463.5,599) -- cycle ;
			\draw  [dash pattern={on 0.84pt off 2.51pt}]  (475.5,599) -- (527.5,599) ;
			\draw  [dash pattern={on 0.84pt off 2.51pt}]  (527.5,599) -- (527.5,620) ;
			\draw  [dash pattern={on 0.84pt off 2.51pt}]  (467.5,619) -- (526.5,619) ;
			\draw [shift={(464.5,619)}, rotate = 0] [fill={rgb, 255:red, 0; green, 0; blue, 0 }  ][line width=0.08]  [draw opacity=0] (8.93,-4.29) -- (0,0) -- (8.93,4.29) -- cycle    ;
			\draw    (464,640) -- (525.5,640) ;
			\draw    (525.5,640) -- (525.5,661) ;
			\draw    (479.5,661) -- (525.5,661) ;
			\draw [shift={(476.5,661)}, rotate = 0] [fill={rgb, 255:red, 0; green, 0; blue, 0 }  ][line width=0.08]  [draw opacity=0] (8.93,-4.29) -- (0,0) -- (8.93,4.29) -- cycle    ;
			\draw   (463.5,661) -- (476.5,661) -- (476.5,687) -- (463.5,687) -- cycle ;
			\draw  [dash pattern={on 0.84pt off 2.51pt}]  (475.5,687) -- (527.5,687) ; 
			\draw  [dash pattern={on 0.84pt off 2.51pt}]  (527.5,687) -- (527.5,708) ;
			\draw  [dash pattern={on 0.84pt off 2.51pt}]  (467.5,707) -- (526.5,707) ;
			\draw [shift={(464.5,707)}, rotate = 0] [fill={rgb, 255:red, 0; green, 0; blue, 0 }  ][line width=0.08]  [draw opacity=0] (8.93,-4.29) -- (0,0) -- (8.93,4.29) -- cycle    ;
			
			\draw    (117,722) -- (178.5,722) ;
			\draw    (178.5,722) -- (178.5,743) ;
			\draw    (132.5,743) -- (178.5,743) ;
			\draw [shift={(129.5,743)}, rotate = 0] [fill={rgb, 255:red, 0; green, 0; blue, 0 }  ][line width=0.08]  [draw opacity=0] (8.93,-4.29) -- (0,0) -- (8.93,4.29) -- cycle    ;
			\draw   (116.5,743) -- (129.5,743) -- (129.5,769) -- (116.5,769) -- cycle ;
			\draw  [dash pattern={on 0.84pt off 2.51pt}]  (128.5,769) -- (180.5,769) ;
			\draw  [dash pattern={on 0.84pt off 2.51pt}]  (180.5,769) -- (180.5,790) ;
			\draw  [dash pattern={on 0.84pt off 2.51pt}]  (120.5,789) -- (179.5,789) ;
			\draw [shift={(117.5,789)}, rotate = 0] [fill={rgb, 255:red, 0; green, 0; blue, 0 }  ][line width=0.08]  [draw opacity=0] (8.93,-4.29) -- (0,0) -- (8.93,4.29) -- cycle    ;
			\draw  [dash pattern={on 0.84pt off 2.51pt}]  (50.5,312) -- (99,312) ;
			\draw  [dash pattern={on 0.84pt off 2.51pt}]  (50.5,433) -- (99,433) ; 
			\draw  [dash pattern={on 0.84pt off 2.51pt}]  (50.5,538) -- (99,538) ;
			\draw  [dash pattern={on 0.84pt off 2.51pt}]  (50.5,713) -- (99,713) ;
			\draw  [dash pattern={on 0.84pt off 2.51pt}]  (50.5,831) -- (99,831) ;
			\draw   (718,233) -- (785.06,233) -- (785.06,268.5) -- (718,268.5) -- cycle ;
			\draw  [dash pattern={on 4.5pt off 4.5pt}]  (753.5,269) -- (753.5,314) ;
			\draw   (745,312) -- (762.5,312) -- (762.5,831) -- (745,831) -- cycle ;
			\draw    (467.5,527) -- (744.5,527) ;
			\draw [shift={(464.5,527)}, rotate = 0] [fill={rgb, 255:red, 0; green, 0; blue, 0 }  ][line width=0.08]  [draw opacity=0] (8.93,-4.29) -- (0,0) -- (8.93,4.29) -- cycle    ;
			\draw    (117,633) -- (245.5,633) -- (441.5,633) ;
			\draw [shift={(444.5,633)}, rotate = 180] [fill={rgb, 255:red, 0; green, 0; blue, 0 }  ][line width=0.08]  [draw opacity=0] (8.93,-4.29) -- (0,0) -- (8.93,4.29) -- cycle    ; 
			\draw    (117,538) -- (178.5,538) ; 
			\draw    (178.5,538) -- (178.5,559) ;
			\draw    (132.5,559) -- (178.5,559) ;
			\draw [shift={(129.5,559)}, rotate = 0] [fill={rgb, 255:red, 0; green, 0; blue, 0 }  ][line width=0.08]  [draw opacity=0] (8.93,-4.29) -- (0,0) -- (8.93,4.29) -- cycle    ;
			\draw   (116.5,559) -- (129.5,559) -- (129.5,585) -- (116.5,585) -- cycle ;
			\draw  [dash pattern={on 0.84pt off 2.51pt}]  (128.5,585) -- (180.5,585) ;
			\draw  [dash pattern={on 0.84pt off 2.51pt}]  (180.5,585) -- (180.5,606) ;
			\draw  [dash pattern={on 0.84pt off 2.51pt}]  (120.5,605) -- (179.5,605) ;
			\draw [shift={(117.5,605)}, rotate = 0] [fill={rgb, 255:red, 0; green, 0; blue, 0 }  ][line width=0.08]  [draw opacity=0] (8.93,-4.29) -- (0,0) -- (8.93,4.29) -- cycle    ;

			\draw (437,243) node [anchor=north west][inner sep=0.75pt]   [align=left] {\textbf{CSP}};
			
			\draw (84,243) node [anchor=north west][inner sep=0.75pt]   [align=left] {\textbf{Analyst}};
			
			\draw (180.5,321) node [anchor=north west][inner sep=0.75pt]   [align=left] {Run$\displaystyle \ \mathsf{HHE.KeyGen}$};
			
			\draw (182.5,375) node [anchor=north west][inner sep=0.75pt]   [align=left] {$\displaystyle (\mathsf{pk_{A} ,sk_{A} ,evk_{A}}) \ $};
			
			\draw (145,405.4) node [anchor=north west][inner sep=0.75pt]    {$m_{1} =\ \langle t_{1} ,\ \mathsf{Enc( pk_{CSP} ,\ evk_{A})} ,\ \sigma_{A}(\mathsf{H}( t_{1} \ || \ \mathsf{evk_A})) \rangle $};
			
			\draw (830,439) node [anchor=north west][inner sep=0.75pt]   [align=left] {Run$\displaystyle \ \mathsf{HHE.Enc}$};
			
			\draw (830,485) node [anchor=north west][inner sep=0.75pt]   [align=left] {$\displaystyle \mathsf{K_i} ,\ c_{x_i},\ c_\mathsf{K_i}$};
			
			\draw (527.5,555) node [anchor=north west][inner sep=0.75pt]   [align=left] {Run$\displaystyle \ \mathsf{HHE.Decomp}$};
			
			\draw (527.5,643) node [anchor=north west][inner sep=0.75pt]   [align=left] {Run$\displaystyle \ \mathsf{HHE.Eval}$};
			
			\draw (532,693) node [anchor=north west][inner sep=0.75pt]   [align=left] {$\displaystyle c_{res}$};
			
			\draw (535,600) node [anchor=north west][inner sep=0.75pt]   [align=left] {$\displaystyle c'_{x_i}$};
			
			\draw (175,689) node [anchor=north west][inner sep=0.75pt]    {$m_{4} =\ \langle t_{4} ,\ c_{res} ,\ \sigma_{CSP}(\mathsf{H}( t_{4} \ || \ c_{res} ))\rangle $};
			
			\draw (182.5,726) node [anchor=north west][inner sep=0.75pt]   [align=left] {Run$\displaystyle \ \mathsf{HHE.Dec}$};
			
			\draw (184,775) node [anchor=north west][inner sep=0.75pt]   [align=left] {$\displaystyle res$};
			
			\draw (70.19,390.44) node [anchor=north west][inner sep=0.75pt]  [rotate=-270.41] [align=left] {\textbf{Setup}};
			
			\draw (70.19,505.44) node [anchor=north west][inner sep=0.75pt]  [rotate=-270.41] [align=left] {\textbf{Upload}};
			
			\draw (70.19,640.44) node [anchor=north west][inner sep=0.75pt]  [rotate=-270.41] [align=left] {\textbf{Eval}};
			
			\draw (70.19,790.44) node [anchor=north west][inner sep=0.75pt]  [rotate=-270.41] [align=left] {\textbf{Classify}};
			
			\draw (736,243) node [anchor=north west][inner sep=0.75pt]   [align=left] {\textbf{User}};
			
			\draw (498,504) node [anchor=north west][inner sep=0.75pt]    {$m_{2} =\ \langle t_{2} ,\ c_{x_i},\ c_\mathsf{K_i} ,\ \sigma _{u_{i}}(\mathsf{H}( t_{2} \ || \ c_{x_i}\ || \ c_\mathsf{K_i})) \rangle $};
			
			\draw (163,612.4) node [anchor=north west][inner sep=0.75pt]    {$m_{3}=\ \langle t_{3},\ (c_{w}, c_{b}),\ \sigma _A( \mathsf{H}( t_{3} \ || \ (c_w, c_b))) \rangle $};
			
			\draw (182.5,542) node [anchor=north west][inner sep=0.75pt]   [align=left] {Run$\displaystyle \ \mathsf{HE.Enc}$};
			
			\draw (182.5,586) node [anchor=north west][inner sep=0.75pt]   [align=left] {$\displaystyle (c_w, c_b)$};
		\end{tikzpicture}
	}
	\caption{$\mathsf{\threePervPPML}$}
	\label{fig:threeparty}
\end{figure}

\noindent\textbf{\textit{Formal Construction:}} 
The overall flow of the protocol remains the same as in the two-party protocol. \autoref{fig:threeparty} provides a visual overview of the protocol.

\smallskip
\noindent\framebox{$\mathsf{\mathbf{\threePervPPML.Setup}}$:}
Each party generates their respective signing/verification key pair for the signature scheme $\sigma$ and publishes the verification keys. \textbf{A} executes the $\mathsf{HHE.KeyGen}$ algorithm 
to output the public, private and evaluation keys ($\mathsf{pk_{A}, sk_{A}, evk_{A}}$). The steps of the phase are summarized below. 
\begin{myframe}{}
\begin{multicols}{2}
\begin{scriptsize}
    \underline{$u_i$ computes}:
    
    	$\mathsf{(pk_{u_{i}}, sk_{u_{i}})} \leftarrow \mathsf{PKE.Gen(1^{\lambda})}$

    \medskip
    
    \underline{\textbf{CSP} computes}:
    
    	$\mathsf{(pk_{CSP}, sk_{CSP})} \leftarrow \mathsf{PKE.Gen(1^{\lambda})}$
    \columnbreak \\
    \underline{\textbf{A} computes}:
    
    $\mathsf{(ver_{A}, sign_{A})} \leftarrow \mathsf{PKE.Gen(1^{\lambda})}$,\\
    $\mathsf{(pk_A, sk_A, evk_A)} \leftarrow \mathsf{HHE.KeyGen(1^{\lambda})}$ 
\end{scriptsize}
\end{multicols}
\end{myframe}
\noindent \textbf{A} then publishes $\mathsf{pk_{A}}$ and sends $\mathsf{evk_{A}}$ to \textbf{CSP} via $m_{1}$. 
\begin{equation*}
    m_{1} = \langle t_{1}, \mathsf{Enc(pk_{CSP}, evk_{A})}, \sigma_A(\mathsf{H}(t_{1} || \mathsf{evk_{A}}))\rangle
\end{equation*}
\noindent\framebox{$\mathsf{\mathbf{\threePervPPML.Upload}}$:}
With the three-party setting, there can be multiple users, namely $u_{i} \in \mathcal{U}$. Hence, each $u_i$ independently runs $\mathsf{HHE.Enc}$, to generate a unique symmetric key $\mathsf{K_{i}}$ through $\mathsf{SKE.Gen}$. 
Then, $u_i$ encrypts their data by computing $\mathsf{SKE.Enc}$, which takes $\mathsf{K_i}$ and $x_{i}$ as input and outputs $c_{x_i}$. Additionally, each $u_i$ homomorphically encrypts its symmetric key $\mathsf{K_{i}}$ using $\mathsf{HE.Enc}$ with inputs $\mathsf{pk_A}$ and $\mathsf{K_i}$ and output $c_\mathsf{K_{i}}$. The steps of this phase are summarized as:
\begin{gather*}
    \mathsf{K_{i}} \leftarrow \mathsf{SKE.Gen(1^{\lambda})},  \\
        \mathsf{SKE.Enc(K_i},x_i) \rightarrow c_{x_i},  \\
	c_\mathsf{K_{i}} \leftarrow \mathsf{HE.Enc(pk_A,\mathsf{K_i})}  \\
\end{gather*}
Finally, $u_i$ sends the encrypted values ($c_{x_i}$ and $c_\mathsf{K_i}$) through $m_{2}$. 
\begin{equation*}
     m_{2} = \langle t_{2}, c_{x_i}, c_{\mathsf{K_{i}}}, \sigma_{u_{i}}(\mathsf{H}(t_{2} || c_{x_i} || c_{\mathsf{K_{i}}}))\rangle
\end{equation*}
\noindent\framebox{$\mathsf{\mathbf{\threePervPPML.Eval}}$:}
The secure evaluation phase in the three-party protocol is initiated by \textbf{A} 
to gain insight into the data provided by any user $u_{i}$. First, \textbf{A} homomorphically encrypts their ML model parameters $(w, b)$ by running the $\mathsf{HE.Enc}$ algorithm, which takes as input $(\mathsf{pk_A}, (w, b))$ and outputs encrypted parameters $(c_{w}, c_{b})$. \textbf{A} then sends $m_{3}$ to the \textbf{CSP} (\autoref{fig:threeparty}). Upon receiving the encrypted model, the \textbf{CSP} transforms $c_{x_i}$ into homomorphic ciphertext $c'_{x_i}$ by running the $\mathsf{HHE.Decomp}$ algorithm which takes as input $\mathsf{evk_A}, c_{x_i},$ and  $c_\mathsf{K_i}$, and outputs $c'_{x_i}$. Subsequently, the \textbf{CSP} runs the $\mathsf{HHE.Eval}$ algorithm, which takes as input $(\mathsf{evk_{A}}, (c_{w}, c_{b}), c'_{x_i})$ to output an encrypted prediction $c_{res}$. Afterwards, $c_{res}$ is sent through $m_{4}$. The phase is summarized below.

\begin{myframe}{}
\begin{multicols}{2}
\begin{scriptsize}
\textbf{A} computes:\\
	$(c_{w}, c_{b}) \leftarrow \mathsf{HE.Enc(pk_A}, (w, b))$, \\ \\ \\
\columnbreak\\
\textbf{CSP} computes:\\
	$c'_{x_i} \leftarrow \mathsf{HHE.Decomp(evk_A}, c_{x_i}, c_\mathsf{K_i})$, \\
	$c_{res} \leftarrow \mathsf{HHE.Eval(evk_{A}}, (c_{w}, c_{b}), c'_{x_i})$ \\ \\
\end{scriptsize}
\end{multicols}
\end{myframe}

\begin{gather*}
    m_{3} = \langle t_{3}, (c_{w}, c_{b}), \sigma_{A}(\mathsf{H}(t_{3} || (c_{w}, c_{b})))\rangle \\
    m_{4} = \langle t_{4}, c_{res}, \sigma_{CSP}(\mathsf{H}(t_{4} || c_{res}))\rangle
\end{gather*}


\section{Threat Model and Security Analysis}
\label{sec:Threat&SecAnal}
In this section, we define the threat model used to prove the security of \protocolTTT{} by formalizing the capabilities of an adversary $\mathcal{ADV}$. 
The PASTA~\cite{dobraunig2021pasta} scheme we adopt as our underlying cryptographic scheme has been proven to be resilient against differential and linear statistical attacks and their variations. The cipher construction incorporates changing linear layers during encryption, which ensures defence against statistical attacks. The authors provide rigorous proof that this layer instantiation is secure, ensuring full diffusion throughout the entire scheme, even in the best-case scenario for an attacker~\cite{dobraunig2021pasta}. PASTA is also secure against algebraic attacks, such as Linearization and Gröbner Basis Attacks. To this end, we define a threat model focusing on the communication between entities in our protocols and not the underlying scheme itself. We consider a powerful adversary $\mathcal{ADV}$ capable of performing a variety of attacks aiming at breaking the security and privacy of the protocols. In general, $\mathcal{ADV}$ is capable of corrupting any number of users and the \textbf{CSP}. However, for \threePervPPMLTTT{}, we assume that \textbf{A} does not collude with the \textbf{CSP}. Additionally, we assume that each entity can verify the owner of a public key. With this assumption, we eliminate the possibility of basic man-in-the-middle attacks. From these definitions and assumptions, we present the following possible attacks:


\begin{myAttack}[Ciphertext Substitution Attack]
	Let $\mathcal{ADV}$ be a malicious adversary. $\mathcal{ADV}$ successfully launches a Ciphertext Substitution Attack if she manages to replace the generated ciphertexts sent by any entity in an indistinguishable way.
\end{myAttack}

\begin{myAttack}[ML Model Unauthorized Access Attack]
	Let $\mathcal{ADV}$ be a malicious adversary. $\mathcal{ADV}$ successfully launches the ML model unauthorized access attack if she manages to learn information about the underlying ML model utilized by either the \textbf{CSP} or the analyst \textbf{A}.
\end{myAttack}

\subsection{Security Analysis}
\label{subsec:secanalysis}
We now prove the security of our protocols in the presence of the adversary defined in \ref{sec:Threat&SecAnal}. 

\begin{proposition}[Ciphertext Substitution Attack Soundness]
	Let $\sigma$ be an EUF-CMA secure signature scheme and $\mathsf{PKE}$ an INC-CPA public key encryption scheme. Then $\mathcal{ADV}$, cannot successfully launch the Ciphertext Substitution Attack against \protocolTTT{}.
\end{proposition}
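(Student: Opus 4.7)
The plan is to proceed by reduction to the EUF-CMA security of $\sigma$, with the second-preimage resistance of $\mathsf{H}(\cdot)$ as an auxiliary primitive. In both \twoPervPPMLTTT{} and \threePervPPMLTTT{}, every protocol message has the uniform shape $\langle t, \pi, s \rangle$, where $\pi$ is a payload containing the ciphertext(s) in question, $t$ is a fresh timestamp, and $s = \sigma_X(\mathsf{H}(t \,\|\, \pi))$ is the signature produced by the honest sender $X$. A successful Ciphertext Substitution Attack therefore requires $\mathcal{ADV}$ to deliver, in place of $\langle t, \pi, s \rangle$, a different message $\langle t', \pi', s' \rangle$ with $\pi' \neq \pi$ that the receiver accepts, meaning the signature verifies under $X$'s verification key and the freshness check on $t'$ passes.

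First I would build a reduction $\mathcal{B}$ that, given an EUF-CMA challenge verification key $\mathsf{ver}^*$ and access to a signing oracle $\mathcal{O}_{\mathsf{sign}}$, simulates a full execution of \protocolTTT{} to $\mathcal{ADV}$. $\mathcal{B}$ generates the HHE keys, the $\mathsf{PKE}$ keys, and every signing key except the one it replaces with $\mathsf{ver}^*$, and faithfully plays all parties; the signatures required from the challenged honest party are obtained by forwarding the relevant hash to $\mathcal{O}_{\mathsf{sign}}$, and each such query is logged. Because $\mathcal{B}$ holds all other secrets honestly, the view shown to $\mathcal{ADV}$ is distributed identically to a real execution, including decryption of PKE ciphertexts in $m_1$ and HHE operations at the \textbf{CSP}.

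When $\mathcal{ADV}$ causes a receiver to accept a substituted message $\langle t', \pi', s' \rangle$ attributed to the challenged party, $\mathcal{B}$ consults its log and distinguishes three cases: (i) the value $\mathsf{H}(t' \,\|\, \pi')$ was never queried to $\mathcal{O}_{\mathsf{sign}}$, so $(\mathsf{H}(t' \,\|\, \pi'), s')$ is a fresh EUF-CMA forgery returned to the challenger; (ii) $\mathsf{H}(t' \,\|\, \pi') = \mathsf{H}(t \,\|\, \pi)$ for some honestly signed pair $(t, \pi)$ while $(t', \pi') \neq (t, \pi)$, giving a second-preimage collision on $\mathsf{H}$; (iii) $(t', \pi') = (t, \pi)$, which contradicts either the substitution requirement $\pi' \neq \pi$ or the freshness check, since the older timestamp of a completed session is rejected. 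Each branch contradicts a stated assumption, so $\mathcal{ADV}$'s advantage is negligible.

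The principal obstacle is the multi-party bookkeeping of \threePervPPMLTTT{}, where $\mathcal{ADV}$ may adaptively corrupt any subset of users together with either the \textbf{CSP} or the analyst \textbf{A} (never both, by the stated non-collusion assumption). The reduction must commit, before seeing which message $\mathcal{ADV}$ attacks, to a single honest party whose verification key is set to $\mathsf{ver}^*$; I would handle this by a standard guessing argument that loses at most a factor $1/q$ in the polynomial number of honest participants, or equivalently by a hybrid over them. Once this bookkeeping is in place, the identical template covers $m_1, m_2, m_3$ of \twoPervPPMLTTT{} and $m_1, m_2, m_3, m_4$ of \threePervPPMLTTT{}, completing the proof.
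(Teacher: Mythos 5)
Your proposal is correct and rests on the same core observation as the paper's own proof: every ciphertext that $\mathcal{ADV}$ would need to substitute is bound into a signature of the form $\sigma_X(\mathsf{H}(t \,\|\, \pi))$, so an indistinguishable substitution amounts to forging the honest sender's signature. Where you differ is in packaging and rigor. The paper proceeds by an informal case analysis over the vulnerable messages ($m_2$ in the Upload phase, $m_3$ in \twoPervPPMLTTT{}'s Eval phase, and $m_3, m_4$ in \threePervPPMLTTT{}'s Eval phase), asserting in each case that tampering ``is equivalent to forging the signature'' and appealing directly to EUF-CMA security; it never constructs a reduction, never accounts for the fact that the signature is computed over a \emph{hash} of the payload, and never addresses which party's key the forgery is against when several honest signers coexist. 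Your version supplies exactly these missing pieces: a single explicit reduction $\mathcal{B}$ that embeds the EUF-CMA challenge key, simulates the remaining parties, and extracts either a fresh forgery or a second-preimage collision on $\mathsf{H}$ (a case the paper silently ignores even though it lists pre-image resistance of $\mathsf{H}$ among its assumptions), together with the standard $1/q$ guessing loss over honest participants in the multi-user setting. The paper's approach buys brevity and makes the per-phase attack surface explicit for the reader; yours buys a genuine security argument with a concrete advantage bound, at the cost of the bookkeeping you describe. One minor caveat: your uniform template $\langle t, \pi, s\rangle$ slightly overreaches for $m_1$, where the transmitted payload is $\mathsf{Enc(pk_{CSP}, evk)}$ but the signed value is the plaintext $\mathsf{evk}$, so the ``payload equals signed content'' correspondence needs a one-line adjustment there; the paper sidesteps this by restricting the attack to the Upload and Eval phases, and your proof would be cleanest doing the same.
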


\begin{proof}
	To successfully perform the Ciphertext Substitution Attack, $\mathcal{ADV}$ needs to successfully attack the \texttt{\twoPervPPML/\threePervPPML.Upload}, {\texttt{\twoPervPPML.Eval}} or {\texttt{\threePervPPML.Eval}} phase of \protocolTTT{} by substituting the actual ciphertexts with a sequence of $\mathcal{ADV}$ generated ciphertexts. To this end, we categorize the attacks into the following cases:
	
	\begin{enumerate}[leftmargin=1.25cm, label=\bfseries Option \arabic*]
		\item {{\color{magenta} ({\texttt{\twoPervPPML/\threePervPPML.Upload}})}:} When a user $u_i$ outsources data to the \textbf{CSP} via $m_{2} = \langle t_{2}, c_{x_i}, c_{\mathsf{K_i}}, \sigma_{u_i}(\mathsf{H}(t_{2} || c_{x_i} || c_{\mathsf{K_i}}))\rangle$, $\mathcal{ADV}$ needs to replace $c_{x_i}$ with $c'_{x_i}$ and $c_{\mathsf{K_i}}$ with $c'_\mathsf{K_i}$. By successfully performing this attack, $\mathcal{ADV}$ can control the outcome of a query to the \textbf{CSP} to manipulate \textbf{A}. To do this, $\mathcal{ADV}$ must: \smallskip
        \begin{itemize}
            \item Generate a symmetric key $\mathsf{K_{ADV}}$;
            \item Use $\mathsf{K_{ADV}}$ to generate a series of ciphertexts $c'$;
            \item Encrypt $\mathsf{K_{ADV}}$ with $\mathsf{pk_{A}}$ or $\mathsf{pk_{u_i}}$ to get $c'_\mathsf{K_{ADV}}$;
            \item Tamper with $m_2$ in an indistinguishable way.
        \end{itemize}
        \smallskip
        The first three tasks are straightforward to achieve. Additionally, substituting $c_{x_i}$ with $c'_{x_i}$ and $c_{\mathsf{K_i}}$  with $c'_{\mathsf{K_i}}$ in the non signature part of $m_2$ is trivial. However, both $c_{x_i}$ and $c'_{x_i}$ are included in the signature; hence, successfully substituting the terms is equivalent to forging $u_i$'s signature. Given the EUF-CMA security of the signature scheme $\sigma$, this can only happen with negligible probability in the security parameter $\lambda$ of $\sigma$.
		\medskip
		\item {{\color{magenta} ({\texttt{\twoPervPPML.Eval}})}:} When \textbf{CSP} returns the results of a secure classification to $u_i$ via $m_{3} = \langle t_{3}, c_{res}, \sigma_{CSP}(\mathsf{H}(t_{3} || c_{res})) \rangle$, $\mathcal{ADV}$ needs to replace $c_{res}$ with $c'_{res}$ for this instance of the attack to be successful. More specifically, since $c_{res}$ is encrypted with $\mathsf{pk_{u_i}}$, $\mathcal{ADV}$ simply encrypts fictitious results $res'$ with $\mathsf{pk_{u_i}}$ to produce $c'_{res}$. However, $c_{res}$ is included in the signature part of $m_3$; hence tampering with $m_3$ requires forging the signature of \textbf{CSP}. Given the EUF-CMA security of the signature scheme $\sigma$, this can only happen with negligible probability.
		
        \medskip
		\item {{\color{magenta} ({\texttt{\threePervPPML.Eval}})}:} When \textbf{A} initiates secure classification via {$m_{3}=\ \langle t_{3},\ (c_{w}, c_{b}),\ \sigma _{A}( \mathsf{H}( t_{3} \ || \ (c_{w}, c_{b}))) \rangle $}, and \textbf{CSP} responds with the secure evaluation via $m_{4} = \langle t_{4}, c_{res}, \sigma_{CSP}$ $(\mathsf{H}(t_{4} || c_{res} \rangle$, $\mathcal{ADV}$ needs to replace $(c_{w}, c_{b})$ with $(c'_{w}, c'_{b})$ and $c_{res}$ with $c'_{res}$ to successfully perform this attack. The proof for this attack is similar to the one provided for the attack on {\texttt{\twoPervPPML.Eval}} with the only difference being that $\mathcal{ADV}$ attacks both $m_3$ and $m_4$ instead of just $m_3$. Using the same reasoning (i.e., the negligible probability of forging the signature), we conclude that $\mathcal{ADV}$ has a negligible probability of tampering with $m_3$ and $m_4$; hence this attack fails.
	\end{enumerate}
\end{proof}
\begin{proposition}[ML Model Unauthorized Access Attack Soundness]
	Let $f$ be a multi-layered ML model and $\mathsf{HE}$ semantically secure encryption scheme. Then $\mathcal{ADV}$ cannot successfully launch the ML Model Unauthorised Access attack for any of the \protocol{} protocols.
\end{proposition}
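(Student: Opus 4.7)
The plan is to argue the two protocols separately, in each case reducing any non-trivial leakage of the model parameters $(w,b)$ to a distinguisher against the semantic security of the underlying $\mathsf{HE}$ scheme. Both reductions proceed under the threat model: for \twoPervPPMLTTT{} the meaningful case is an $\mathcal{ADV}$ corrupting (some) users but not the \textbf{CSP}, since the \textbf{CSP} stores $(w,b)$ in plaintext and its corruption is a trivial, out-of-scope win; for \threePervPPMLTTT{} the adversary may corrupt any set of users and the \textbf{CSP}, but \textbf{A} (the model owner) is honest and, by assumption, does not collude with the \textbf{CSP}.

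For \twoPervPPMLTTT{}, the parameters $(w,b)$ never leave the \textbf{CSP} in cleartext; they influence $\mathcal{ADV}$'s view only through the ciphertext $c_{res}$ delivered in $m_3$, which is an HHE ciphertext under $\mathsf{pk_{u_i}}$. First I would construct a reduction that, given the public key and a challenge ciphertext from the HE semantic security experiment, simulates $m_1$ and $m_2$ using freshly sampled signing keys and dummy symmetric material, and embeds the challenge in place of $c_{res}$. Since every other message in the transcript is independent of $(w,b)$, any non-negligible advantage of $\mathcal{ADV}$ in learning $(w,b)$ from the simulated view yields the same advantage in the semantic security game and is therefore negligible in~$\lambda$.

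For \threePervPPMLTTT{}, the model is held by \textbf{A} and appears in the transcript only as $(c_w,c_b)$ inside $m_3$ and as $c_{res}$ inside $m_4$, both HE-encrypted under $\mathsf{pk_A}$. The \textbf{CSP} (possibly corrupted) knows only $\mathsf{evk_A}$, not $\mathsf{sk_A}$, so a standard hybrid argument suffices: first replace $(c_w,c_b)$ by encryptions of zero vectors of matching dimension, then replace $c_{res}$ by the encryption of a fixed dummy value. Each hop is computationally indistinguishable by the semantic security of $\mathsf{HE}$, and the final hybrid is information-theoretically independent of $(w,b)$. Summing the polynomially many hops bounds $\mathcal{ADV}$'s total advantage by a negligible function.

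The main obstacle is formalizing what it means to \emph{learn information about the model}, because the protocols, by design, grant black-box prediction access and any such access inherently leaks some information about $f(\boldsymbol{\theta})$ (the usual model-extraction concern, orthogonal to cryptography). I would therefore cast the attack as a real-or-random game in which the challenger secretly chooses between two parameter sets $(w_0,b_0)$ and $(w_1,b_1)$ that agree on every evaluation actually invoked during the game, so that legitimate functional leakage is quotiented out; the proposition then reduces precisely to the semantic-security reductions above. A minor subtlety is that in \threePervPPMLTTT{} a corrupted user still observes $c_{res}$ inside $m_4$, but since that ciphertext is encrypted under $\mathsf{pk_A}$ and $\mathsf{sk_A}$ is held only by the honest \textbf{A}, the same semantic-security argument handles this case without modification.
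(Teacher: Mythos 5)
Your treatment of \threePervPPMLTTT{} is essentially the paper's argument: the model appears in the transcript only as $(c_w,c_b)$ and $c_{res}$ under $\mathsf{pk_A}$, no corrupted party holds $\mathsf{sk_A}$, and semantic security of $\mathsf{HE}$ closes the case (your hybrid chain is just a more careful writing of the paper's one-line appeal).

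The \twoPervPPMLTTT{} case, however, has a genuine gap. The corrupted users are the \emph{intended decryptors}: each $u_i$ generates $(\mathsf{pk_{u_i}},\mathsf{sk_{u_i}})$ itself, and the Classify phase has it run $\mathsf{HHE.Dec}(\mathsf{sk_{u_i}},c_{res})$ to obtain $res_i=f(x_i)$ in the clear. You therefore cannot embed an IND-CPA challenge in place of $c_{res}$: your reduction receives only $\mathsf{pk}$ from the semantic-security challenger, yet must simulate the view of an adversary who by construction possesses the matching secret key and expects to decrypt that very ciphertext. Even under your real-or-random formalization (two parameter sets agreeing on every invoked evaluation), indistinguishability of the two worlds is a statement about the distribution of $\mathsf{HHE.Eval}$'s output \emph{as seen by the holder of the decryption key} --- that is, circuit privacy of the homomorphic evaluation, which semantic security does not provide (BFV evaluation without sanitization is not circuit-private; the noise can depend on the operands $(w,b)$). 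Moreover, your quotienting deliberately removes exactly the leakage the paper's proof is about for this protocol: the paper's Option~1 argues that an adversary who collects plaintext pairs $(x_j,f(x_j))_{j\in S}$ from colluding users cannot reconstruct $f$, justified (admittedly only informally) by the assumption that $f$ is multi-layered. That model-extraction claim is the actual content of the two-party case; it is neither implied by nor reducible to semantic security, so for \twoPervPPMLTTT{} your proposal proves a different and strictly weaker statement than the proposition as the paper interprets it.
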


\begin{proof}
	To successfully launch the ML Model Unauthorized Access attack, $\mathcal{ADV}$ must collude or corrupt multiple entities in \protocol{} depending on the use case. More specifically, $\mathcal{ADV}$ can either attack \twoPervPPMLTTT{} by colluding with multiple users $(u_j)_{j \in S}$ where $S \subseteq [n]$ or attack \threePervPPMLTTT{} by colluding with a user $u_i$ and the \textbf{CSP}. To this end, we distinguish the attack into the following;
 
	
	\begin{enumerate}[leftmargin=1.15cm,label=\bfseries Option \arabic*]
		\item {{\color{magenta} (Attacking \twoPervPPMLTTT)}:} Let $f$ be a multi-layered ML model owned by \textbf{CSP}, 
		and assume that $\mathcal{ADV}$ colludes with $n' = card(S)$ users. On completion of the secure evaluation phase (\autoref{fig:twoparty}), each user receives $res_{i} = f(x_i)$, where $x_i$ is the input from the user. $\mathcal{ADV}$ successfully launches this attack if with the help of the $n'$ colluding users, $\mathcal{ADV}$ can solve for $f$ given $(res_{j})_{j\in S}$ and $(x_{j})_{j\in S}$. With the assumption that $f$ is a multi-layered ML model, the likelihood of this attack is negligible.
		
		\item {{\color{magenta} (Attacking \threePervPPMLTTT{})}:} Let $f$ be a multi-layered ML model owned by \textbf{A} with $(c_{w}, c_{b})$ as the HE encrypted parameters sent to \textbf{CSP} via {$m_{3}=\ \langle t_{3},\ (c_{w}, c_{b}),\ \sigma _{A}( \mathsf{H}( t_{3} \ || \ (c_{w}, c_{b}))) \rangle $}. $\mathcal{ADV}$ successfully launches this attack if given a corrupt user $u_i$ and a corrupt \textbf{CSP}, $\mathcal{ADV}$ successfully retrieves $f$ in the form of $(w, b)$. $\mathsf{HE}$ has been proven semantically secure; hence, the likelihood of $\mathcal{ADV}$ decrypting the ciphertext $(c_{w}, c_{b})$ is considered negligible.
	\end{enumerate}
\end{proof}

\section{Experiments Section}
\label{sec:performanceanalysis}
In this section, we report the process and results of building a privacy-preserving inference protocol on sensitive medical data (ECG) based on our \threePervPPMLTTT{} construction~\autoref{subsec:MLApp}. Subsequently, we extensively evaluated the computational performance of the \twoPervPPMLTTT{} and \threePervPPMLTTT{} protocols (\autoref{subsec:computation}). For these evaluations, we utilized a dummy dataset where each data input was a vector of four random integers, and the weights and biases were also integer vectors of length four. Finally, to provide concrete evidence of the efficiency of our protocols, we compared the performance of our constructions against a basic BFV HE scheme~\autoref{subsec:commparison}. For these experiments, our primary testbed was a commercial desktop with a 12th Generation Intel i7-12700 CPU with~20 cores and~32GB of RAM running on an Ubuntu 20.04 operating system. Furthermore, in all the evaluations, we utilized the SEAL cryptographic library\footnote{\href{https://github.com/microsoft/SEAL}{https://github.com/microsoft/SEAL}} for basic HE operations and PASTA library\footnote{\href{https://github.com/IAIK/hybrid-HE-framework}{https://github.com/IAIK/hybrid-HE-framework}} to implement the secure symmetric cipher. To ensure statistical significance, each experiment was repeated 50 times, with the average results considered to provide a comprehensive overview of the performance of each algorithm under evaluation. 


\subsection{PPML Application}
\label{subsec:MLApp}
In the first phase of our evaluations, we demonstrated the real-world applicability of our \threePervPPMLTTT{} protocol by applying it to a PPML application with a sensitive heartbeat dataset to classify whether a heartbeat is subjected to heart disease or not. More specifically, we employed the MIT-BIH dataset~\cite{moody2001impact}, which is a dataset of human heartbeats obtained from 47 subjects from 1975 to 1979. 
\begin{figure}[H]
	\centering
	\includegraphics[width=\linewidth]{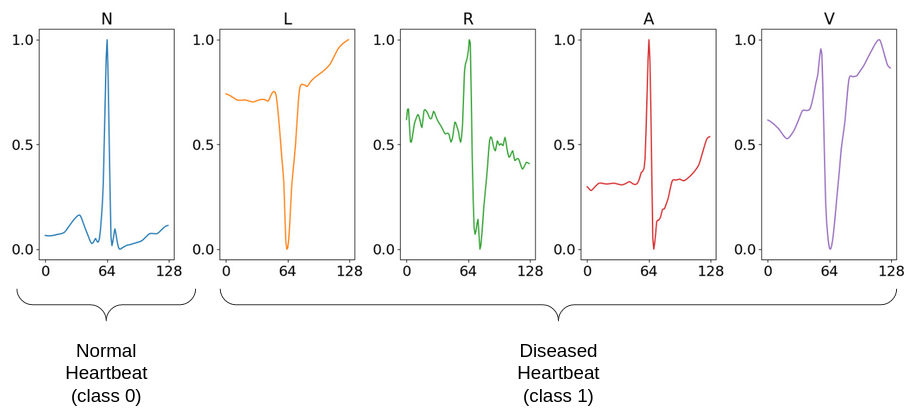}
	\caption{Example heartbeats from the MIT-BIH dataset.  
}
	\label{fig:mitbih_float_signals}
\end{figure}


\subsubsection{MIT-BIH dataset} 
In total, the dataset contains 48 half-hour excerpts of two-channel ECG recordings. We used the processed ECG data from~\cite{abuadbba2020can}, which contains a train split and a test split, each comprising 13,245 ECG examples that belong to five classes: normal heartbeat (N), right bundle branch block (R), left bundle branch block (L), atrial premature contraction (A), ventricular premature contraction (V). Each ECG example is a float 1D time series signal of length 128 with values in the range of $[0, 1]$. We further grouped all ECG examples from the later four classes (L, R, A, V) into one super class called "diseased heartbeats" (\autoref{fig:mitbih_float_signals}). By doing so, we simplified our problem into a binary classification problem. Based on the input ECG signal, we tried to classify if it is a normal or diseased heartbeat. As previously discussed, our HHE protocols are based on the BFV scheme, which only works with integer data and arithmetic. However, our ECG data are floating-point numbers in the range of $[0, 1]$, and normally, training neural networks also produces models with weights and biases in floating-point numbers. 
To this end, we first quantized the ECG data into 4-bit integer data with values in the range of $[0, 15]$ (\autoref{fig:mitbih_float_int_signals}). 

\begin{figure}[H]
	\centering
	\includegraphics[width=.9\linewidth]{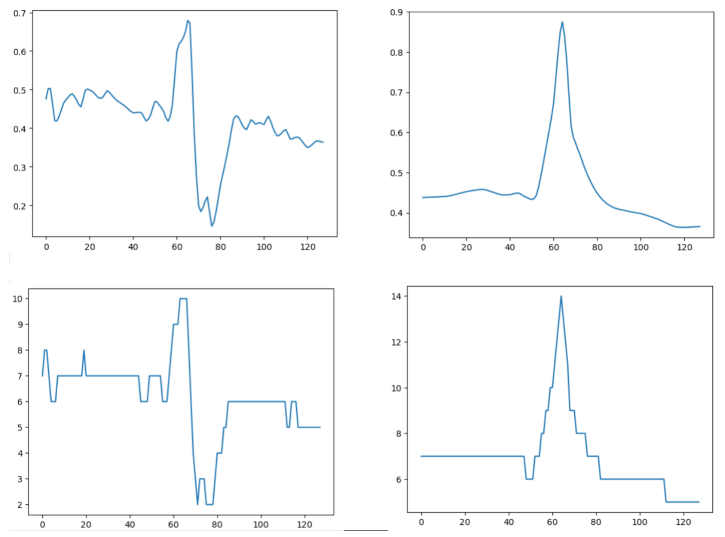}
	\caption{Quantizing ECG data into 4-bit integers. Top: Floating-point ECG data. Bottom: Corresponding quantized ECG data.}
	\label{fig:mitbih_float_int_signals}
\end{figure}
\vspace{-1em}

On preparing the data, we trained a simple neural network with one fully connected (FC) layer with a sigmoid activation function on the floating-point ECG dataset. The neural network can be written as $f(\boldsymbol{\theta}) = \text{sigmoid}(wx + b)$, where $\boldsymbol{\theta} = (w, b)$ are the model's weights and biases, and $x$ is the training data. For the integer ECG data, we used the PocketNN framework~\cite{song2022pocketnn} to train the same neural network in integer arithmetic. In both cases, we trained until the models' predictive accuracy on the test data split no longer improved. Training in floating-point arithmetic produced the best train accuracy of 89.36\% at epoch 482 and the best test accuracy of 88.93\% at epoch 500. With respect to training in integer arithmetic, we got the best train accuracy of 86.65\% at epoch 42 and the best test accuracy of 87.06\% at epoch 31. From these results, we observed that the test accuracy of the integer neural network was only 1.87\% lower than the test accuracy produced by the floating-point neural network. Note that when training in integer arithmetic, we constrained the values of the weight and biases to be in the range of $[-2047, 2048]$, or $[-2^{11}+1, ~2^{11}]$. The reason for this is that the HHE protocol works in $\mathbb{Z}_q$ where $q=2^{16}+1$; therefore, all computation results need to be in the range of $[-2^{15}+1, ~2^{15}]$; otherwise they will be wrapped around by the modulo operation and produce incorrect predictions. As mentioned above, our ECG data are 4-bit integers and in the range of $[0, 2^4 - 1]$. In the 1 FC neural network, we needed to do one encrypted element-wise matrix multiplication between the data and the weights, and hence, if the weights are in $[-2^{11}+1, ~2^{11}]$, the results will be constrained in  $[-2^{11}+1, ~2^{11}] \times 2^{4} = [-2^{15}+2^4, ~2^{15}] \approx [-2^{15}+1, ~2^{15}],$ which is what we needed. After training and getting the trained integer model, we built \ecgPPML{} -- a privacy-preserving inference protocol on the quantized integer ECG data based on our \threePervPPMLTTT{} protocol construction and ran the experiments with the results reported below. %

To begin our evaluations, we ran the encrypted inference protocol on a different number of examples from the test split, then compared the predictions with the ground-truth outputs to get the encrypted test accuracy. For each experiment, we also made inferences on plaintext ECG data in floating point and integer arithmetic to compare the results. \autoref{tab:ecgPPMLAcc} shows the encrypted and plaintext accuracies on a different number of data inputs. Overall, the accuracy of plaintext inference in integer arithmetic is very similar to encrypted inference accuracies. When the number of input examples is low (1-500), integer arithmetic and encrypted accuracies are comparable or even higher than plaintext floating-point inference. When the number of data inputs increases (1000-2000 examples), plaintext integer and encrypted inference produce slightly lower accuracies (0.5-0.8\% lower). For 1000-2000 input data samples, encrypted inference had higher accuracy than plaintext integer inference but with a minimal margin (0.1-0.15\%). We note that this is due to HE noise which helps make a few more correct predictions.

\begin{table}[ht!]
	\centering
	\scalebox{0.9}{
		\begin{tabular}{|c|c|c|c|c|c|c|}
			\hline
			\rowcolor{frenchblue}
			\color{white}{Data Inputs} & \color{white}{Plaintext (Float)} & \color{white}{Plaintext (Integer)} & \color{white}{~~Encrypted~~} \\
			\hline 
			1 &  100 \% & 100 \% & 100 \%\\
			\hline
			\rowcolor{Gray}
			10 & 90 \% & 90 \% & 90 \% \\
			\hline
			20 &  90 \% & 95 \% & 90 \%  \\
			\hline
			\rowcolor{Gray}
			50 &  88 \% & 92 \% & 90 \% \\
			\hline
			100 & 86 \% & 91 \% & 90 \% \\
			\hline
            \rowcolor{Gray}
			500 & 87 \% & 87.2 \% & 86.8 \% \\
			\hline
            1000 & 87.9 \% & 87.3 \% & 87.4 \% \\
			\hline
            \rowcolor{Gray}
            2000 & 88.2 \% & 87.4 \% & 87.55 \% \\
			\hline
	\end{tabular}}
	\caption{Accuracy Analysis -- \ecgPPML{}  \label{tab:ecgPPMLAcc}}
\end{table}

\vspace{-1em}

Subsequently, we evaluated the computational overhead of the \ecgPPML{} protocol (\autoref{tab:ecgPPMLComp}). In the integer and float plaintext inference protocol, the client and analyst are not required to perform any computation as they outsource all their data and neural network model to the CSP. Therefore, in~\autoref{tab:ecgPPMLComp}, the client and analyst only have a single column for the encrypted inference results. Looking at these results (i.e.\ across all data input examples for the encrypted inference protocol), we observed that the CSP is responsible for the most computational overhead (99\% or even more), which increased linearly with the number of data inputs. Compared to plaintext inference, encrypted inference is more computationally expensive. However, encrypted inference for one data sample takes~12.18 seconds on a commercial desktop, which is a promising result. Furthermore, these experimental results align with our goal and expectation for the HHE protocol, as we want the client and analyst to do minimal work, and most of the computations take place in the CSP. Finally, we analyzed the communication cost of the \ecgPPML{} protocol and reported the results in~\autoref{tab:ecgPPMLComm}. For the communication between the client and the CSP, we observed that when the number of data inputs was low (1-500), the encrypted communication cost was very high compared to the plaintext costs due to the size of the HE ciphertext of the symmetric key (1.8 Mb). However, once the number of input examples increased, the size of the symmetrically encrypted data being sent to the CSP increased linearly, similar to the plaintext size, making this difference unimportant. There is no communication between the client and the analyst in plaintext protocols, while in the encrypted inference protocol, the client only transfers the HE public key to the analyst, which has a fixed size of 2.06 Mb. Overall, the communication cost for the client was minimal and increased linearly with the number of data inputs submitted to the CSP. The communication cost between the analyst and CSP was also minimal for plaintext inference since the plaintext weights, biases, and results were small. On the other hand, the majority of the communication cost for the encrypted inference protocol was incurred between the analyst and CSP. This increased cost is caused by the HE-encrypted output of the linear layer that needs to be sent from the CSP to the analyst. Hence, if the number of data inputs increases, the communication cost between the analyst and CSP will increase linearly. We observed that the communication cost for 2000 data input examples is 5548.21 Mb, or about 5 Gb of data, which is a reasonable result for today's internet bandwidth.

\begin{table}[ht!]
\centering
\scalebox{0.7}{\begin{tabular}{|c|c|c|c|c|c|} 
\hline
\rowcolor{frenchblue}
\textcolor{white}{Data Inputs} & \textcolor{white}{Client}    & \textcolor{white}{Analyst}   & \multicolumn{3}{c|}{\textcolor{white}{CSP }}                                                                                                                             \\ 
\hline
\rowcolor[rgb]{0.208,0.518,0.894}                               & \textcolor{white}{Encrypted} & \textcolor{white}{Encrypted} & \multicolumn{1}{c|}{\textcolor{white}{Plaintext (float)}} & \multicolumn{1}{c|}{\textcolor{white}{Plaintext (integer)}} & \textcolor{white}{Encrypted}                 \\ 
\hline
1                                                               & 0.16                         & 0.52                          & 0                                                           & 0.23                                                        & 12.18                                        \\ 
\hline
\rowcolor{Gray} 10                            & 0.18                         & 0.58                         & 0                                                           & 0.24                                                        & 120.44   \\ 
\hline
20                                                              & 0.2                         & 0.63                         & 0                                                           & 0.23                                                        & 241.21                                       \\ 
\hline
\rowcolor{Gray} 50                            & 0.27                         & 0.803                         & 0                                                           & 0.25                                                        & 601.95                                       \\ 
\hline
100                                                             & 0.4                         & 1.091                         & 0                                                           & 0.24                                                        & 1212.38                                      \\ 
\hline
\rowcolor{Gray} 500                           & 1.31                         & 3.38                         & 0.05                                                        & 0.24                                                        & \textcolor[rgb]{0.051,0.051,0.051}{6021.98}  \\ 
\hline
1000                                                            & 2.46                         & 6.2                         & 0.1                                                         & 0.25                                                        & \textcolor[rgb]{0.051,0.051,0.051}{12058.2}  \\ 
\hline
\rowcolor{Gray} 2000                          & 4.8                        & 11.92                        & 0.2                                                         & 0.27                                                        & \textcolor[rgb]{0.051,0.051,0.051}{24153.5}  \\ 
\hline
\multicolumn{1}{l}{}                                            & \multicolumn{1}{l}{}         & \multicolumn{1}{l}{}         & \multicolumn{1}{l}{}                                        & \multicolumn{1}{l}{}                                        & \multicolumn{1}{l}{}                        
\end{tabular}}
\caption{Computation Analysis -- \ecgPPML{}. All numbers 
in seconds.
\label{tab:ecgPPMLComp}}
\end{table}

\vspace{-1em}

These experimental results show that our \ecgPPML{} protocol produces comparable results in accuracy compared to inference on plaintext data. Furthermore, the CSP is responsible for most of the computation costs, and the majority of communication cost also occurs between the CSP and the analyst. These results align with our vision of using HHE for PPML applications and show the potential for HHE when applied in real-world PPML applications. 

\begin{table}[ht!]
\centering
\scalebox{0.6}{
\begin{tabular}{|c|c|c|c|c|c|c|c|}
\hline
\rowcolor{frenchblue} \textcolor{white}{Data Inputs} & \multicolumn{3}{c|}{\textcolor{white}{Client - CSP}}                                                                                                                                                                                                                                                                       & \textcolor{white}{Client - Analyst} & \multicolumn{3}{c|}{\textcolor{white}{Analyst - CSP }}                                                                                 \\ 
\hline
\rowcolor[rgb]{0.208,0.518,0.894}                               & \begin{tabular}[c]{@{}>{\cellcolor[rgb]{0.208,0.518,0.894}}c@{}}\textcolor{white}{Plaintext}\\\textcolor{white}{(float)}\end{tabular} & \begin{tabular}[c]{@{}>{\cellcolor[rgb]{0.208,0.518,0.894}}c@{}}\textcolor{white}{Plaintext}\\\textcolor{white}{(integer)}\end{tabular} & \textcolor{white}{Encrypted}             & \textcolor{white}{Encrypted}        & \begin{tabular}[c]{@{}>{\cellcolor[rgb]{0.208,0.518,0.894}}c@{}}\textcolor{white}{Plaintext}\\\textcolor{white}{(float)}\end{tabular} & \begin{tabular}[c]{@{}>{\cellcolor[rgb]{0.208,0.518,0.894}}c@{}}\textcolor{white}{Plaintext}\\\textcolor{white}{(integer)}\end{tabular} & \textcolor{white}{Encrypted}                 \\ 
\hline
1                                                               & \textcolor[rgb]{0.024,0.024,0.024}{0.0002}                                                                                            & \textcolor[rgb]{0.024,0.024,0.024}{0.0002}                                                                                              & \textcolor[rgb]{0.051,0.051,0.051}{1.8} & 2.06                                & \textcolor[rgb]{0.024,0.024,0.024}{0.0017}                                                                                            & \textcolor[rgb]{0.024,0.024,0.024}{0.000734}                                                                                            & 72.46                                        \\ 
\hline
\rowcolor{Gray} 10                            & \textcolor[rgb]{0.024,0.024,0.024}{0.002}                                                                                             & \textcolor[rgb]{0.024,0.024,0.024}{0.002}                                                                                               & 1.8                                     & 2.06                                & \textcolor[rgb]{0.024,0.024,0.024}{0.0017}                                                                                            & \textcolor[rgb]{0.024,0.024,0.024}{0.000734}                                                                                            & 97.11                                        \\ 
\hline
20                                                              & \textcolor[rgb]{0.024,0.024,0.024}{0.005}                                                                                             & \textcolor[rgb]{0.024,0.024,0.024}{0.005}                                                                                               & 1.81                                     & 2.06                                & \textcolor[rgb]{0.024,0.024,0.024}{0.0017}                                                                                            & \textcolor[rgb]{0.024,0.024,0.024}{0.000734}                                                                                            & 124.51                                       \\ 
\hline
\rowcolor{Gray} 50                            & \textcolor[rgb]{0.024,0.024,0.024}{0.012}                                                                                             & \textcolor[rgb]{0.024,0.024,0.024}{0.012}                                                                                               & 1.81                                     & 2.06                                & \textcolor[rgb]{0.024,0.024,0.024}{0.0017}                                                                                            & \textcolor[rgb]{0.024,0.024,0.024}{0.000734}                                                                                            & 206.692                                      \\ 
\hline
100                                                             & \textcolor[rgb]{0.024,0.024,0.024}{0.029}                                                                                             & \textcolor[rgb]{0.024,0.024,0.024}{0.029}                                                                                               & 1.83                                     & 2.06                                & \textcolor[rgb]{0.024,0.024,0.024}{0.0017}                                                                                            & \textcolor[rgb]{0.024,0.024,0.024}{0.000734}                                                                                            & 343.643                                      \\ 
\hline
\rowcolor{Gray} 500                           & 1.1                                                                                                                                   & \textcolor[rgb]{0.024,0.024,0.024}{1.1}                                                                                                 & \textcolor[rgb]{0.051,0.051,0.051}{2.9} & 2.06                                & \textcolor[rgb]{0.024,0.024,0.024}{0.0017}                                                                                            & \textcolor[rgb]{0.024,0.024,0.024}{0.000734}                                                                                            & \textcolor[rgb]{0.051,0.051,0.051}{1439.27}  \\ 
\hline
1000                                                            & 2.3                                                                                                                                   & \textcolor[rgb]{0.024,0.024,0.024}{2.3}                                                                                                 & \textcolor[rgb]{0.051,0.051,0.051}{4.1} & 2.06                                & \textcolor[rgb]{0.024,0.024,0.024}{0.0017}                                                                                            & \textcolor[rgb]{0.024,0.024,0.024}{0.000734}                                                                                            & \textcolor[rgb]{0.051,0.051,0.051}{2809.02}  \\ 
\hline
\rowcolor{Gray} 2000                          & 4.6                                                                                                                                   & \textcolor[rgb]{0.024,0.024,0.024}{4.6}                                                                                                 & \textcolor[rgb]{0.051,0.051,0.051}{6.4} & 2.06                                & \textcolor[rgb]{0.024,0.024,0.024}{0.0017}                                                                                            & \textcolor[rgb]{0.024,0.024,0.024}{0.000734}                                                                                            & \textcolor[rgb]{0.051,0.051,0.051}{5548.21}  \\ 
\hline
\multicolumn{1}{l}{}                                            & \multicolumn{1}{l}{}                                                                                                                  & \multicolumn{1}{l}{}                                                                                                                    & \multicolumn{1}{l}{}                     & \multicolumn{1}{l}{}                & \multicolumn{1}{l}{}                                                                                                                  & \multicolumn{1}{l}{}                                                                                                                    & \multicolumn{1}{l}{}                        
\end{tabular}
}
\caption{Communication Analysis -- \ecgPPML{}. All numbers are in Megabytes (Mb). \label{tab:ecgPPMLComm}}
\end{table}
\vspace{-1em}

\subsection{Computational Analysis}
\label{subsec:computation}
This subsection focused on the computational performance of the core algorithms executed by each entity in the \twoPervPPMLTTT{} and \threePervPPMLTTT{} protocols. More precisely, we measured the time taken to execute each HHE algorithm in each protocol phase by the responsible party. For both the {\texttt{\twoPervPPML.Setup}} and {\texttt{\threePervPPML.Setup}} phases, we observed the time taken to generate a set of HHE keys at the user and the analyst, respectively. For implementation purposes, the $\mathsf{HHE.KeyGen}$ algorithm involved the generation of encryption parameters $\mathsf{parms}$, secret key $\mathsf{sk}$, public key $\mathsf{pk}$, relinkey $\mathsf{rk}$, and a Galois key $\mathsf{gk}$, and took~243 milliseconds to execute. Subsequently, in the {\texttt{\twoPervPPML.Upload}} and  {\texttt{\threePervPPML.Upload}} phases, we measured the time taken to homomorphically encrypt a symmetric key $\mathsf{K}$ using the $\mathsf{HE.Enc}$ 
and the time taken to execute the $\mathsf{SKE.Enc}$ 
for various 
inputs ranging from~1 to~300 (each 
input is an integer vector of length 4). $\mathsf{HE.Enc}$ took~7 milliseconds to run. For a single data input, $\mathsf{SKE.Enc}$ executed in~2 milliseconds and~600 milliseconds for~300 data inputs (\autoref{tab:execution}). 

\begin{table}[ht!]
	\centering
	\scalebox{0.75}{
		\begin{tabular}{|c|c|c|c|c|c|}
			\hline
			\rowcolor{frenchblue}		
			\color{white}{Inputs} & \color{white}{$\mathsf{SKE.Enc}$} & \color{white}{$\mathsf{HHE.Decomp}$} & \color{white}{$\mathsf{HHE.Eval}$ (2P)} & \color{white}{$\mathsf{HHE.Eval}$ (3P)} & \color{white}\textbf{$\mathsf{HHE.Dec}$} \\
			\hline 
			1 &  2 ms  &  11.9 s &  7 ms & 0.038 s & 3 ms\\
			\hline
			\rowcolor{Gray}
			50 &  100 ms  &  599.1 s & 350 ms & 1.96 s & 150 ms \\
			\hline
			100 &  200 ms  & 1197.7 s &  700 ms & 3.93 s & 300 ms \\
			\hline
			\rowcolor{Gray}
			150 &  300 ms  & 1794.5 s & 1050 ms & 5.77 s & 450 ms \\
			\hline
			200 & 400 ms & 2394.2 s & 1400 ms & 7.69 s & 600 ms\\
			\hline
			\rowcolor{Gray}
			250 & 500 ms & 2989.2 s & 1750 ms & 9.61 s & 750 ms\\
			\hline
			300 & 600 ms & 3595.6 s & 2100 ms & 11.61 s & 900 ms\\
			\hline
	\end{tabular}}
	\caption{Computational Analysis \label{tab:execution}}
\end{table}

When evaluating the {\texttt{\twoPervPPML.Eval}} phase; we measured the cost of executing the $\mathsf{HHE.Decomp}$ algorithm for various numbers of symmetric ciphertexts from~1 to~300, and the cost of executing the $\mathsf{HHE.Eval}$ algorithm for various homomorphic ciphertexts (1 to~300). For a single input, $\mathsf{HHE.Decomp}$ took~11.9 seconds, while $\mathsf{HHE.Eval}$ took~7 milliseconds. On the other hand, for~300 inputs, $\mathsf{HHE.Decomp}$ took~3595.6 seconds, while $\mathsf{HHE.Eval}$ took~2100 milliseconds (\autoref{tab:execution}). When evaluating \texttt{\threePervPPML.Eval}, we first 
measured the performance of $\mathsf{HE.Enc}$ 
at \textbf{A} and then the performance of the $\mathsf{HHE.Decomp}$ and $\mathsf{HHE.Eval}$ algorithms
on various number of symmetric ciphertext inputs from~1 to~300 at the \textbf{CSP}. $\mathsf{HE.Enc}$ took~16 milliseconds to execute, while the results for $\mathsf{HHE.Decomp}$ were similar to those from {\texttt{\twoPervPPML.Eval}}. For a single input, $\mathsf{HHE.Eval}$ took~38 milliseconds to execute and~11.6 seconds for~300 inputs.

\begin{table}[ht!]
	\centering
	\scalebox{1}{
		\begin{tabular}{|c|c|c|c|c|c|c}
			\hline
			\rowcolor{frenchblue}		
			\color{white}{Phase} & \color{white}{User} & \color{white}{Server} & \color{white}{Total}\\
			\hline 
			$\textbf{{\twoPervPPML.Setup}}$ & 243 ms & -- & 243 ms \\
			\hline
			\rowcolor{Gray}
			$\textbf{{\twoPervPPML.Upload}}$ & 607 ms & -- & 607 ms \\
			\hline
			$\textbf{{\twoPervPPML.Eval}}$ & -- & 3597.7 s & 3597.7 s \\
			\hline
			\rowcolor{Gray}
			$\textbf{{\twoPervPPML.Classify}}$ & 900 ms & -- & 900 ms \\
			\hline
	\end{tabular}}
	\caption{Total Computation Cost -- \twoPervPPML~for 300 data inputs\label{tab:2partyT}}
\end{table}

\noindent Finally, in both {\texttt{\twoPervPPML.Classify}} and {\texttt{\threePervPPML.Classify}} phases, we focused primarily on measuring the cost of 
$\mathsf{HHE.Dec}$ 
for a range of homomorphic ciphertext inputs from~1 to~300. For a single input, $\mathsf{HHE.Dec}$ ran in~3 milliseconds, and for~300 inputs, it ran in~900 milliseconds (\autoref{tab:execution}). \autoref{tab:2partyT} and \autoref{tab:3partyT} provide the computational analysis of 
\twoPervPPMLTTT{} and \threePervPPMLTTT~protocols respectively for~300 
inputs.

\begin{table}[ht!]
	\centering
	\scalebox{0.9}{
		\begin{tabular}{|c|c|c|c|c|c|c|}
			\hline
			\rowcolor{frenchblue}		
			\color{white}{Phase} & \color{white}{Analyst} & \color{white}{User} & \color{white}{Server} & \color{white}{Total}\\
			\hline 
			$\textbf{{\threePervPPML.Setup}}$ & 243 ms & -- & -- & 243 ms \\
			\hline
			\rowcolor{Gray}
			$\textbf{{\threePervPPML.Upload}}$ & -- & 607 ms & -- & 607 ms\\
			\hline
			$\textbf{{\threePervPPML.Eval}}$ & 16 ms & -- & 3607.21 s & 3607.23 s\\
			\hline
			\rowcolor{Gray}
			$\textbf{{\threePervPPML.Classify}}$ & 900 ms & -- & -- & 900 ms\\
			\hline
	\end{tabular}}
	\caption{Total Computation Cost--\threePervPPML~for 300 data inputs\label{tab:3partyT}}
\end{table}

\subsection{Comparison with plain BFV}
\label{subsec:commparison}
To provide concrete evidence of the efficiency of our proposed construction, we implemented a plain BFV scheme with a similar architecture to \threePervPPMLTTT{}~and compared the results. More precisely, we measured the performance of a plain BFV scheme, where a user continuously encrypts data input homomorphically before outsourcing them to the \textbf{CSP}. The same encryption parameters were used for all implementations. For these experiments, we only focused on comparing the total computational and communication costs of running the {\texttt{\threePervPPML.Upload}} phase of our protocol, with the cost of continuously using HE encryption in the plain BFV. We varied the number of data inputs from~1 to~300 (each data input is an integer vector of length 4).

\begin{figure}[ht!]
	\begin{minipage}{.7\linewidth}
		\centering
	\begin{tikzpicture}
		\begin{axis}[
			width=.9\textwidth,
			height=.9\textwidth,
			xlabel={Data (\# input vectors)},
			ylabel={Time (ms)},
			xmin=0, xmax=300,
			ymin=0, ymax=2500,
			xtick={0,50, 100, 150, 200, 250, 300},
			ytick={0, 500, 1000, 1500, 2000, 2500},
			legend pos =north west,
			ymajorgrids=true,
			grid style = dashed,
			]
			\addplot[
			color=blue,
			mark=square,
			]
			coordinates {
				(1,9)(50,107)(100,207)(150,307)(200,407)(250,507)(300,607)
			};
			\addplot[
			color=red,
			mark=square,
			]
			coordinates {
				(1,7)(50,373)(100,748)(150,1145)(200,1540)(250,1906)(300,2240)
			};
			\legend{\threePervPPMLTTT{}, BFV}
		\end{axis}
	\end{tikzpicture}
	\caption{Computation Costs}
	\label{fig:compare1}
	\end{minipage}
	\hfill
	\begin{minipage}{.7\linewidth}
		\centering
	\begin{tikzpicture}
		\begin{axis}[
			width=.9\textwidth,
			height=.9\textwidth,
			xlabel={Data (\# input vectors)},
			ylabel={Size (MB)},
			xmin=0, xmax=300,
			ymin=0, ymax=600,
			xtick={0,50, 100, 150, 200, 250, 300},
			ytick={0, 100, 200, 300, 400, 500, 600},
			legend pos =north west,
			ymajorgrids=true,
			grid style = dashed,
			]
			\addplot[
			color=blue,
			mark=square,
			]
			coordinates {
				(1,1.82)(50,1.83)(100,1.83)(150,1.83)(200,1.84)(250,1.84)(300,1.84)
			};
			\addplot[
			color=red,
			mark=square,
			]
			coordinates {
				(1,1.82)(50,91)(100,183)(150,274)(200, 365)(250,457)(300,548)
			};
			\legend{\threePervPPMLTTT{}, BFV}
		\end{axis}
	\end{tikzpicture}
	\caption{Communication Costs}
	\label{fig:compare2}
\end{minipage}
\end{figure}

For a single data input, \texttt{\threePervPPML.Upload} took~9 milliseconds to execute, while the plain BFV scheme took~7 milliseconds to perform a single HE encryption. It is worth noting that the plain BFV scheme is marginally faster for a single data value. However, this is due to the fact that \texttt{\threePervPPML.Upload} involves two operations (a symmetric encryption operation and an HE encryption operation), while the plain BFV scheme involves just one HE encryption operation. However, when the number of data values 
was increased to 300, \texttt{\threePervPPML.Upload} ran in~0.608 seconds, while the plain BFV scheme ran in~2.2 seconds. \autoref{fig:compare1} provides an overview of the computational comparison results obtained from this phase of our experiments. It is worth pointing out the fact that, in most cases, uploading just one single input is unrealistic since most PPML services will require a plethora of data to properly evaluate a problem. Subsequently, we compared the communication expenses by measuring the total size of transferable ciphertext data in bytes from a user $u_i$ to \textbf{CSP}. Overall, \texttt{\threePervPPML.Upload} sent approximately~1.8 MB of ciphertext data for both a single input and~300 inputs. This is primarily because the size of a symmetric ciphertext is almost negligible as compared to that of a homomorphic ciphertext. The plain BFV, on the other hand, sent approximately~1.82 MB of ciphertext data for a single data input and~547.8 MB for~300 data inputs. \autoref{fig:compare2} provides an overview of the comparison of the communication costs for~1 to~300 different inputs. From these results, it is evident that \protocolTTT{} reduces the communication and computational burden of $u_i$ and transfers them to \textbf{CSP}.


\medskip
\noindent \textbf{\textit{Open Science \& Reproducible Research}}
To support open science and reproducible research and provide other researchers with the opportunity to use, test, and hopefully extend our work, the source codes used for the evaluations have been made available online\footnote{\href{https://github.com/iammrgenie/hhe_ppml}{https://github.com/iammrgenie/hhe\_ppml}}\textsuperscript{,}\footnote{\href{https://github.com/khoaguin/PocketHHE}{https://github.com/khoaguin/PocketHHE}}.

\section{Conclusion}
\label{sec:conclusion}
This paper is one of the first attempts to effectively use the novel concept of HHE to address the problem of privacy-preserving machine learning. We have provided a realistic solution that carefully considers the vagaries of PPML. The designed approach is able to carefully balance ML functionality and privacy so as to allow the use of PPML techniques in a wide range of areas, such as pervasive computing, where, in many cases, the underlying infrastructure presents certain inbuilt limitations. By using HHE, we managed to overcome the main difficulties of PPML application in real-life scenarios, where the majority of data is collected and processed by constraint devices. Certain that the future of cryptography goes hand in hand with ML, we believe we have made the first step towards implementing PPML services with strong security guarantees, which operate efficiently in a wide range of architectures.

\begin{acks}
This work was funded by the HARPOCRATES EU research project (No. 101069535) and the Technology Innovation Institute (TII), UAE, for the project ARROWSMITH.
\end{acks}
%
%
%
\bibliographystyle{ACM-Reference-Format}
\bibliography{guardml}

\appendix

\end{document}